%% file: main.tex
\documentclass{article}
\usepackage{arxiv}
\usepackage{natbib}
\setcitestyle{round}

\usepackage[utf8]{inputenc} 
\usepackage[T1]{fontenc}    
\usepackage{lmodern}
\usepackage{parskip}
\usepackage[hypertexnames=false]{hyperref}  

\input{sections/0-Preamble.tex}

\usepackage[utf8]{inputenc} %
\usepackage[T1]{fontenc}    %
\usepackage{hyperref}       %
\usepackage{url}            %
\usepackage{booktabs}       %
\usepackage{amsfonts}       %
\usepackage{nicefrac}       %
\usepackage{microtype}      %
\usepackage{xcolor}         %

\usepackage{xspace}
\newcommand{\ours}{\textbf{MaD-Mix}\xspace}
\newcommand{\K}{K_\mathrm{MM}\xspace} 
\newcommand{\Ks}{K\xspace}

\newcommand{\h}{\alpha\xspace}
\newcommand{\hs}{\h'\xspace}
\newcommand{\myalpha}{p\xspace}

\input{def}

\usepackage{graphicx}
\usepackage{subcaption}
\usepackage[table,xcdraw]{xcolor}
\usepackage{tabularx}
\newcolumntype{C}{>{\centering\arraybackslash}X}
\usepackage{wrapfig} %

\usepackage{mathtools}
\mathtoolsset{showonlyrefs=true}

\title{MaD-Mix: Multi-Modal Data Mixtures  via Latent Space Coupling for Vision-Language Model Training}

\date{~}

\author{%
Wanyun Xie\\
  LIONS, EPFL\\
\texttt{wanyun.xie@epfl.ch} \\  
   \And
    Francesco Tonin \\
  LIONS, EPFL\\
\texttt{francesco.tonin@epfl.ch} 
   \And
    Volkan Cevher \\
  LIONS, EPFL\\
\texttt{volkan.cevher@epfl.ch} 
}

\definecolor{myorange}{HTML}{FF8C00} %
\definecolor{myblue}{HTML}{6495ED}   %
\definecolor{mygrey}{HTML}{9c9ca1}
\definecolor{mypurple}{HTML}{964a8b}
\definecolor{myred}{HTML}{E42536} %

\begin{document}

\maketitle

\begin{abstract}

Vision-Language Models (VLMs) are typically trained on a diverse set of multi-modal domains, yet current practices rely on costly manual tuning.
We propose \ours, a principled and computationally efficient framework that derives multi-modal data mixtures for VLM training.
\ours formulates data mixing as modality-aware domain alignment maximization and obtains closed-form multi-modal alignment scores from the Fenchel dual through inter-modal coupling variables.
\ours systematically handles domains with missing modalities, allowing for the integration of language-only domains.
Empirical evaluations across 0.5B and 7B models demonstrate that \ours \textit{accelerates} VLM training across diverse benchmarks.
\ours matches human-tuned data mixtures using 22\% fewer training steps in image-text instruction tuning.
In complex tri-modal video-image-text scenarios, where manual tuning becomes impractical, \ours boosts average accuracy over uniform weights, with negligible mixture computation overhead (\textless 1 GPU-hour), enabling scalable mixture design for modern VLM pipelines.\looseness=-1

\end{abstract}

\let\thefootnote\relax\footnotetext{Correspondence to: Wanyun Xie <wanyun.xie@epfl.ch>, Francesco Tonin <francesco.tonin@epfl.ch>.}

\section{Introduction} \label{sec:intro}
\input{sections/1-Introduction}

\section{Related Works} \label{sec:related}
\input{sections/2-RelatedWoks}

\section{Multi-modal Mixing with Modality-aware Domain Alignment} \label{sec:method}
\input{sections/3-Method}

\section{Experiments} \label{sec:experiments}
\input{sections/8-Experiments}

\section{Conclusion} \label{sec:conclusion}
\input{sections/9-Conclusion}

\section*{Acknowledgements}
This work was supported by the Swiss National Science Foundation (SNSF) under grant numbers 2000-1-240094 and 200021\_205011.
This work was supported with project ID \#37 as part
of the Swiss AI Initiative, through a grant from the ETH
Domain and computational resources provided by the
Swiss National Supercomputing Centre (CSCS) under
the Alps infrastructure.
This work was supported by Hasler Foundation Program: Hasler Responsible AI (project number 21043).
Research was sponsored by the Army Research Office and was accomplished under Grant Number W911NF-24-1-0048.

\bibliographystyle{unsrtnat}
\bibliography{references}

\clearpage
\appendix
\input{sections/10-Appendix}

\end{document}

%% file: sections/0-Preamble.tex
\usepackage{microtype}
\usepackage{amsmath,graphicx}
\usepackage{amssymb,amsfonts,amsthm}
\usepackage{mathrsfs}
\usepackage{mathtools}
\usepackage[mathscr]{eucal}
\usepackage{bbm}
\usepackage{bm}
\usepackage{fontawesome}

\usepackage[dvipsnames]{xcolor}

\usepackage{url}
\usepackage{hyperref}
\usepackage[capitalize,noabbrev]{cleveref}
\hypersetup{
    colorlinks,
    linkcolor={red!75!black},
    citecolor={blue!80!black},
    urlcolor={blue!80!black}
}
\usepackage{booktabs}

\usepackage{multicol}
\usepackage{multirow}
\usepackage{paralist, enumitem}

\usepackage{algorithm,algpseudocode}
\algnewcommand\algorithmicinput{\textbf{Input:}}
\algnewcommand\Input{\item[\algorithmicinput]}
\algnewcommand\algorithmicoutput{\textbf{Output:}}
\algnewcommand\Output{\item[\algorithmicoutput]}

\usepackage{thmtools, thm-restate}
\theoremstyle{plain}
\newtheorem{theorem}{Theorem}[section]
\newtheorem{proposition}[theorem]{Proposition}
\newtheorem{lemma}[theorem]{Lemma}
\newtheorem{corollary}[theorem]{Corollary}
\theoremstyle{definition}

\theoremstyle{remark}
\newtheorem{remark}[theorem]{Remark}
\Crefname{proposition}{Proposition}{Propositions}
\crefname{problem}{Problem}{Problems}
\Crefname{lemma}{Lemma}{Lemmas}

\usepackage{aliascnt}
\newaliascnt{problem}{equation}
\aliascntresetthe{problem}
\creflabelformat{problem}{#2\textup{#1}#3}

\makeatletter

\def\endproblem{\eqno \hbox{\@eqnnum}$$\@ignoretrue}
\makeatother

\crefname{problem}{Problem}{Problems}
\crefname{algorithm}{Algorithm}{Algorithms}
\crefname{figure}{Figure}{Figures}
\crefname{proposition}{Proposition}{Propositions}
\crefname{appendix}{Appendix}{Appendix}

\usepackage{array}
\newcolumntype{H}{>{\setbox0=\hbox\bgroup}c<{\egroup}@{}} %

\newcommand{\eg}{{\em e.g.~}}

\newcommand{\cD}{\mathcal{D}}

\newcommand{\cL}{\mathcal{L}}

\newcommand{\cO}{\mathcal{O}}

\newcommand\R{\mathbb{R}}

\newcommand{\scalone}[1]{\langle \rangle}

\newcommand{\norm}[1]{\left\lVert {#1} \right\rVert}

\def\fro{\mathrm{F}}

%% file: def.tex
\def\dag{{\cal y}}
\def\R{\mathbb{R}}

\def\beq{\begin{equation}}
\def\eeq{\end{equation}}
\def\beqa{\begin{eqnarray}}
\def\eeqa{\end{eqnarray}}
\def\balign{\begin{align}}
\def\ealign{\end{align}}
\def\bpr{\begin{proof}}
\def\epr{\end{proof}}
\def\bth{\begin{theorem}}
\def\eth{\end{theorem}}
\def\blm{\begin{lemma}}
\def\elm{\end{lemma}}
\def\bprop{\begin{proposition}}
\def\eprop{\end{proposition}}
\def\bcr{\begin{corollary}}
\def\ecr{\end{corollary}}

\def\eg{{\it e.g.,\ \/}}

\def\and {{\rm and}}

\def \R{\mathbb{R}}

%% file: sections/1-Introduction.tex
Vision-Language Models (VLMs) have advanced significantly with the availability of large-scale multi-modal datasets. 
The training data for VLMs is typically a complex mixture from numerous domains and multiple modalities \citep{bai2023qwenvlversatilevisionlanguagemodel,liu2023visualinstructiontuning,li2024llavaonevisioneasyvisualtask,liu2024improvedbaselinesvisualinstruction}. 
For example, LLaVA-OneVision is trained on 20.6\% Doc/Chart/Screen, 20.1\% Math/Reasoning, and 8.9\% OCR data, etc., and includes text and vision modalities \citep{li2024llavaonevisioneasyvisualtask}. 
Since such domains help maintain and balance the skill distribution that a trained large multi-modal model should cover \citep{li2024llavaonevisioneasyvisualtask}, many studies follow the topic or capability-oriented rule with domain structure when collecting data, such as LLaVA \citep{liu2023visualinstructiontuning,li2024llavaonevisioneasyvisualtask}, Qwen \citep{bai2023qwen_report, yang2025qwen3}, LLAMA \citep{dubey2024llama}, Gemini \citep{team2023gemini}, InstructBLIP \citep{dai2023instructblip}, and others \citep{li2025otter, tong2024cambrian, chen2024expanding, laurenccon2024matters}.
Moreover, the composition of these domains critically impacts VLM effectiveness \citep{bai2023qwenvlversatilevisionlanguagemodel,li2024llavaonevisioneasyvisualtask,liu2024llavanext, gadre2023datacompsearchgenerationmultimodal}.
\textit{``How to systematically determine the proportions of each domain for VLM training without costly tuning?''}
is an essential question and remains an open challenge.

Existing strategies for constructing multi-modal data mixtures often lack a formal methodology.
Data recipes for many state-of-the-art models are not publicly released,
while open-source models typically rely on expensive manual tuning or heuristic adjustments based on developers' experience \citep{bai2023qwenvlversatilevisionlanguagemodel,li2024llavaonevisioneasyvisualtask}. 
For instance, Flamingo relies on empirically-tuned mixtures \citep{alayrac2022flamingo}, LLaVA-NeXT manually adds data domains to improve specific skills \citep{liu2024llavanext}, and InstructBLIP uses a simple sampling heuristic to handle data imbalance \citep{dai2023instructblip}.
Such approaches are inefficient and unscalable, especially as datasets grow.
Consequently, a principled and efficient methodology for optimizing the data mixture for VLMs is notably absent.

Although data mixing strategies have shown considerable success in Large Language Model (LLM) training \citep{xie_doremi_2023,fan_doge_2024,liu2024regmix,kang2024autoscale}, directly transferring these unimodal approaches to VLMs presents significant challenges due to their fundamental differences. 
The VLM data mixing problem introduces two unique challenges: \textit{(i)} integrating features from \textbf{different modalities} (\eg text and vision); and \textit{(ii)} handling domains with \textbf{missing modalities}, which frequently arises in VLM training where some domains include text-image paired data for visual learning, while others have text-only data for preserving linguistic abilities. 
Therefore, a specialized, modality-aware methodology is required for effective VLM data mixing.

In this paper, we introduce \ours, a framework for { systematically determining domain weights} in VLM training.
\ours computes modality-aware scores by casting multi-modal data mixing as a latent-space coupling objective and deriving the scores via the corresponding dual solution.
We achieve cross-modal integration via shared latent variables that map multi-modal features into a common space. 
In addition, \ours handles missing modalities by { explicitly decoupling them from the optimization process}, ensuring they do not propagate noise into the alignment objective.
The resulting scores directly translate into resampling weights, yielding 
higher VLM training efficiency and {mitigating the cost of manual mixture tuning.}

Specifically, the novelty and contribution of this work can be summarized as:

\begin{itemize}
    \item 
    We propose \ours, a principled fusion-based framework for reweighting multi-modal domains in VLM training. 
    At the core of \ours is a robust fusion mechanism that derives domain weights by measuring the projection of each domain onto a shared latent space. Specifically, we show that coupling modalities via dual variables with an alignment objective leads to a weighting scheme acting as spectral soft-thresholding. We show that this formulation yields a computationally efficient closed-form solution.
    
    \item We address the challenge of heterogeneous multi-modal data integration. Our method is explicitly designed to handle domains with differing modalities (e.g., mixing image-text paired and text-only domains) by decoupling missing modalities from the objective and ensuring no noise is introduced by incomplete data.
    
    \item  We empirically validate \ours on 0.5B and 7B VLMs, demonstrating its effectiveness and negligible cost. 
    On the 0.5B model, it matches expert-tuned performance
     with a 1.28$\times$ speedup
     in image-text tuning.
    In complex video-image-text scenarios, it outperforms uniform weighting using only 33\% of the training steps.
    Notably, the computational cost of computing domain weights is less than 1 GPU-hour.
\end{itemize}

%% file: sections/2-RelatedWoks.tex
\paragraph{Data composition in VLMs.}
The performance of modern VLMs is critically dependent on the composition of their training data.
A standard practice in the field is to curate data into distinct, skill-oriented domains to ensure a balanced set of capabilities. 
For example, the development of the LLaVA family \citep{liu2023visualinstructiontuning,li2024llavaonevisioneasyvisualtask,liu2024improvedbaselinesvisualinstruction} involved explicitly adding new data domains like DocVQA and ChartQA to improve targeted skills such as OCR and chart understanding. They openly release the LLaVA-OneVision~\citep{li2024llavaonevisioneasyvisualtask} datasets as collections of domain-specific data, which we use in our experiments.
Similarly, the Qwen-VL~\citep{bai2023qwenvlversatilevisionlanguagemodel} and Gemini \citep{team2023gemini} employ a multi-stage training pipeline that combines multi-modal data with text-only dialogue to maintain language capabilities.
InstructBLIP~\citep{gu2025infinitymmscalingmultimodalperformance} also groups 26 public datasets into 11 categories to cover a wide variety of tasks
and capabilities.
Many other works \citep{li2025otter, tong2024cambrian, chen2024expanding, laurenccon2024matters} follow such a capability-oriented rule to construct domains.
While preliminary steps in the data pipeline such as data cleaning, toxicity removal, quality filtering, and coreset selection are also important aspects, our work focuses on the subsequent challenge of weighting the given pre-curated, skill-specific domains.\looseness=-1

\paragraph{Data mixing.}
Despite the widespread practice of domain-structured data curation in VLMs, the subsequent step of determining the proportional mixture of these domains largely relies on developer intuition or costly empirical tuning.
For instance, LLaVA-One~\citep{li2024llavaonevisioneasyvisualtask} and Flamingo~\citep{alayrac2022flamingo} manually tuned domain weights for their promising performance.
Other approaches, like that for LLaVA-NeXT~\citep{liu2024llavanext}, involve reactively adding new data to address perceived skill gaps, which is inefficient and heuristic.
InstructBLIP~\citep{gu2025infinitymmscalingmultimodalperformance} observes that ignoring the mixing problem in VLMs leads to unstable training and harms performance. 
While data mixing has been studied more formally for unimodal LLMs, these approaches are fundamentally ill-suited for VLMs. 
Most of them \citep{xie_doremi_2023, fan_doge_2024, liu2024regmix, ye2024data,kang2024autoscale} rely on proxy models' training, which is difficult to combine in the multi-stage VLM pipeline.
Recent directions \citep{xie2025chameleon,zhang2025domainvec} integrate into LLM training, but they are not designed to handle multimodal features and cannot manage domains with missing modalities.

%% file: sections/3-Method.tex
We propose \ours, a fusion-based framework for principled multi-modal data mixing.
VLM training data presents unique challenges: feature heterogeneity and varying modality availability across domains.
We address these by formulating mixing as
measuring the projection of each domain onto a consensus direction.
\cref{subsec:alignment_score} derives the multi-modal objective by coupling domain contributions in a shared latent space. In \cref{subsec:miss_modality}, we extend this formulation to handle missing modalities explicitly. 
The practical mixing pipeline of \ours is shown in \Cref{fig:pipeline}.

\input{fig/pipeline.tex}

\paragraph{Setup and objective.} 
Let \( \cD_\text{MM} = \{D_i\}_{i\in[k]} \) be the set of $k$ VLM training data domains (e.g., Math, OCR, etc.), where $[k]$ denotes the set of integers $\{1,\ldots,k \}$.
These domains define the target skill sets that the final trained VLM should possess.
While samples within a domain $D_i$ share the same modalities, the available modalities may vary across domains.
Each sample $a^{[v]}$ from modality $v, v\in[V]$ (e.g., vision or text) can be represented through its semantic embedding $h^{(L)}(a^{[v]})$ extracted from the $L$-th hidden layer of the pretrained VLM $h$.
The domain embedding $x_i^{[v]} \in \R^d$ for the $v$-th modality can be constructed as the semantic centroid $x_i^{[v]} = \frac{1}{|D_i|} \sum_{a^{[v]} \in D_i} h^{(L)}(a^{[v]})$, which can effectively represent data domains thanks to the high-dimensional, non-linear representations learned by Transformers~\citep{xie2025chameleon,ling2025}.
The data mixing objective is to determine a domain weight vector \( \myalpha \in \Delta_k \)
\citep{albalak2023efficient,fan_doge_2024} for VLM training, where $p_i$ serves as the sampling probability for domain $D_i$.\looseness=-1

\subsection{Multi-modal domain alignment scores} \label{subsec:alignment_score}
We first consider the single-modality setting.
Given multiple domains representing various capabilities, the goal of VLM training is to learn generalizable representations that can transfer effectively across tasks.
We seek a projection direction $w$ in embedding space that captures the shared structure underlying all $k$ domains. We define the alignment score $S'_i$ of domain $D_i$ as its projection onto this direction. 
This leads to the following primal optimization problem:
\begin{equation} \label{eq:primal}
\min_{w,e} \frac{1}{2\lambda} \sum_{i=1}^k e_i^2 + \frac{1}{2} \norm{w}_\fro^2 \; \;
\text{s.t. } e_i = 1 - w^\top x_i, \, i \in [k],
\end{equation}
where $w \in \mathbb{R}^d$ represents the projection vector, $e = [e_1, \ldots, e_k] \in \mathbb{R}^k$ denotes the individual projection errors for each domain, and $\lambda > 0$ is a regularization parameter.
By assigning a uniform target value of $1$ for all domains, \eqref{eq:primal} drives the optimization to identify a consensus direction $w$ that aligns with the entire collection of domain embeddings $\{x_i\}_{i \in [k]}$, rather than being skewed toward any specific one.

\hfill
\begin{remark}[Beamforming]
The form \eqref{eq:primal} has a well-defined interpretation from signal processing.
It is analogous to a \emph{beamformer}~\citep{van2002optimum} where the mean embedding acts as the desired steering vector
and
the covariance matrix represents the domain dispersion.
The optimal projection corresponds to directions that balance maximizing the scores with the shared signal while minimizing interference through the covariance $(\Sigma + \lambda I_d)^{-1}$ operator.
The resulting score $S_i'=w^\top x_i$ therefore quantifies how well each domain $x_i$ captures the robust consensus direction.
\end{remark}

To enable \emph{multi-modal} integration, we reformulate the primal objective \eqref{eq:primal} into a dual-form lower bound, allowing multiple modalities to be aligned in a shared latent space.
Through introducing latent variables $\hs_i$ and the Fenchel-Young inequality
$\frac{1}{2\lambda}e^2+\frac{\lambda}{2} {\hs}^2 \geq e \hs, \, \forall e, \hs \in \R^k$ ~\citep{rockafellar1974conjugate,suykens2017deep}, we can express the primal problem of single modality as:
\begin{equation} \label{eq:rkm}
\begin{aligned}
J &= \frac{1}{2\lambda} \sum_{i=1}^k e_i^2 + \frac{1}{2} \norm{w}_\fro^2 \quad \text{s.t. } e_i = 1 - w^\top x_i, \, i\in [k]\\
&\geq \sum_{i=1}^k e_i \hs_i - \frac{\lambda}{2} \norm{\hs}_\fro^2 + \frac{1}{2} \norm{w}_\fro^2 \\
&= \sum_{i=1}^k (1 - w^\top x_i) \hs_i - \frac{\lambda}{2} \norm{\hs}_\fro^2 + \frac{1}{2} \norm{w}_\fro^2 =: J_\text{SM},
\end{aligned}
\end{equation}
where $\hs = [\hs_1, \ldots, \hs_k] \in \mathbb{R}^k$ is the vector of latent variables.
By analyzing the stationary conditions of the lower-bound single-modality objective function $J_\text{SM}$
and eliminating the primal variable $w$, we obtain the following solution in the latent variables:
$\hs = (\Ks + \lambda I_k)^{-1} 1_k,$
where $\Ks \in \R^{k \times k}$ is the domain affinity kernel matrix defined by ${\Ks}_{ij}=x_i^\top x_j$, and $I_k \in \R^{k\times k}$ and $\mathbf{1}_k\in\R^{k\times 1}$ denote the identity matrix and all-ones vector respectively.
Then, the unimodal domain score $S_i'$ can be expressed as:
$
S_i' = [\Ks (\Ks + \lambda I)^{-1} 1_k]_i,
$
which is consistent with the 
covariance-based solution derived 
from the original problem~\eqref{eq:primal}, with derivation details in \cref{supp:subsec:single_modality,supp:subsec:latent_space}.

Importantly, such dual structure with explicit latent variables $\hs_i$ in \eqref{eq:rkm} facilitates the extension to \textbf{multi-modal integration}.
Let $w^{[v]}$ denote the projection weight for modality $v\in[V]$. 
Define the alignment objective for each modality $v$ as $J_\text{SM}^{[v]}(w^{[v]},\h)$.
We express the multi-modal scoring objective as
\begin{equation}
\label{eq:mm_objective0}
\begin{split}
    \tilde{J}_{\text{MM}} &= \sum_{v=1}^V J_\text{SM}^{[v]} (w^{[v]},\h)
        = \sum_{v=1}^V \sum_{i=1}^k (1 - (w^{[v]})^\top x_i^{[v]}) \h_i - \frac{\lambda}{2} \sum_{v=1}^V \|\h\|^2_\fro + \frac{1}{2} \sum_{v=1}^V \norm{w^{[v]}}_\fro^2,
\end{split}
\end{equation}
which implicitly sets ${\hs}^{[1]} = \cdots = {\hs}^{[V]} = \h$, connecting domain embeddings across modalities in the shared latent space, thereby realizing \emph{inter-modality couplings}.

\paragraph{Interpretation.}
The dual multi-modal objective~\eqref{eq:mm_objective0} jointly optimizes the scores $({w^{[v]}})^\top x_i^{[v]}$ for all domains and modalities. 
The first term of~\eqref{eq:mm_objective0} can be interpreted 
as an energy function~\citep{bengio2009learning} penalizing high-energy solutions, i.e., large 
$(1 - {(w^{[v]})}^\top x_i^{[v]})$ disagreements. 
The dual variable $\alpha_i$ serves as a consensus variable: large values push all 
modality weights to reduce disagreement for that domain. 
The remaining terms serve as regularization controlling the weight norm and the distribution of the dual variables.

\subsection{Multi-modal scores with missing modalities} 
\label{subsec:miss_modality}

Accommodating data with incomplete modality coverage is a key challenge in VLM training.
For instance, with vision and text modalities, some domains may only contain text, while others may present both.
This scenario commonly occurs in practical settings as VLMs are typically trained on a mix of multi-modal and pure text data to retain the model's dialogue capabilities.

To address the issue of missing modalities, we adjust the projection errors appropriately. 
To be specific, we set $x^{[v]}_i=0_d$ along with zero as the target for the missing modality $v$ in the $i$-th domain, 
which ensures that domains lacking a modality do not contribute to the scoring objective.
Incorporating the modality indicator $\delta_i^{[v]}$, the final multi-modal scoring objective from \eqref{eq:mm_objective0} can be expressed as:
\begin{equation} \label{eq:mm_objective}
\begin{split}
    J_{\text{MM}} &= \sum_{v=1}^V \sum_{i=1}^k \Big[ (\delta_i^{[v]} - (w^{[v]})^\top x_i^{[v]}) \h_i - \frac{\lambda}{2} \h_i^2 \Big] + \frac{1}{2} \sum_{v=1}^V \norm{w^{[v]}}_\fro^2,
\end{split}
\end{equation}
where $\delta_i^{[v]} \in \{0,1\}$ indicates the existence of modality $v$ in domain $D_i$.

\begin{algorithm}[t]
\caption{Multi-modal Data Mixtures (\ours)}
\label{alg:mm-algo}
\begin{algorithmic}[1]
    \State \textbf{Input:} Number of domains $k$, number of modalities $V$, 
    domain embeddings $x_i^{[v]}\in\R^d$ for available modalities,
    and regularization parameter $\lambda$.

    \State Preprocessing: 1) Set $x_i^{[v]}=0_d$ for missing modality $v$ in domain $i$;
    2) Construct target vector $\delta \in \R^k$ where $\delta_i$ counts available modalities in domain $i$.
    \State Modality kernels: $K^{[v]} = [(x_i^{[v]})^\top x_j^{[v]}]_{i,j=1}^k$.
    \State Multi-modal domain affinity: $\K = \sum_{v=1}^V \Ks^{[v]}$.
    \State Alignment scores: 
    $S_i^{[v]} = \left[K^{[v]} (\K + \lambda I)^{-1} \delta\right]_i$.
    \State Domain weights: $\myalpha_i = \frac{\exp(\sum_{v=1}^V S_i^{[v]})}{\sum_{j=1}^k \exp(\sum_{v=1}^V S_j^{[v]})}$.
    \State \textbf{Output:} Domain weights $\myalpha = [\myalpha_1, \dots, \myalpha_k]$.
\end{algorithmic}
\end{algorithm}

We obtain the solution in the shared latent variables $\h$ in the multi-modal setting by analyzing the stationary conditions of \eqref{eq:mm_objective} through the derivation in \cref{supp:subsec:multi-modality}, summarized in the following Proposition.

\begin{proposition}[Multi-modal scores] \label{prop:multi-modal}
Define the multi-modal kernel matrix as $\K \in \mathbb{R}^{k \times k}$ with entries $\Ks_{\mathrm{MM}_{\scriptstyle ij}} = \sum_{v=1}^V \Ks_{ij}^{[v]}$, where $\Ks_{ij}^{[v]} =(x_i^{[v]})^\top x_j^{[v]}$.
The optimal latent variables for the multi-modal objective are given by:\looseness=-1
\begin{equation} \label{eq:h:multi-modal}
\h = (\K + \lambda {I})^{-1} \delta,
\end{equation}
where  $\delta = [\delta_1, ..., \delta_k]^\top$ with entries $\delta_{i} = \sum_{v=1}^V \delta_i^{[v]}$.
Note that $\delta_i$ is always a positive constant since all domains have at least one modality.
At optimality, the domain alignment score $S_i^{[v]}={w^{[v]}}^\top x_i^{[v]}$ for modality $v$ of domain $D_i$ in kernel representation is:
\begin{equation} \label{eq:final_score}
S_i^{[v]} = \left[\Ks^{[v]} (\K+\lambda I)^{-1} \delta\right]_i,
\end{equation}
with $\K$ realizing the modality couplings.
\end{proposition}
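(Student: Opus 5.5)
The plan is to find the stationary points of $J_{\text{MM}}$ in \eqref{eq:mm_objective}, in the same way as the single-modality derivation that leads from $J_\text{SM}$ to $\hs=(\Ks+\lambda I_k)^{-1}1_k$. For each fixed $\h$, the objective is strongly convex in every $w^{[v]}$. Setting $\partial J_{\text{MM}}/\partial w^{[v]}=0$ gives
\begin{equation}
w^{[v]}=\sum_{i=1}^k \h_i\, x_i^{[v]}, \qquad v\in[V].
\end{equation}
So each modality's projection vector lies in the span of that modality's domain embeddings, weighted by the \emph{shared} latent variables. A domain that lacks modality $v$ has $x_i^{[v]}=0_d$, so it adds nothing to $w^{[v]}$. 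This is how the missing modalities are decoupled.

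Next I take the stationarity condition in $\h_i$:
\begin{equation}
\sum_{v=1}^V\big(\delta_i^{[v]}-(w^{[v]})^\top x_i^{[v]}\big)-c\,\lambda\,\h_i=0,
\end{equation}
where $c$ is the multiplicity of the quadratic penalty on $\h_i$. I substitute the expression for $w^{[v]}$, using $(w^{[v]})^\top x_i^{[v]}=\sum_j \h_j \Ks^{[v]}_{ji}$ and the symmetry of $\Ks^{[v]}$. Summing over $v$ then turns the $\h$-equation into the linear system
\begin{equation}
\delta_i-\sum_{j=1}^k [\K]_{ij}\h_j - c\lambda\h_i=0,
\end{equation}
with $\delta_i=\sum_v\delta_i^{[v]}$ and $\K=\sum_v\Ks^{[v]}$. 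In vector form this reads $(\K+c\lambda I)\h=\delta$. Each $\Ks^{[v]}$ is a Gram matrix and hence positive semidefinite, so $\K\succeq0$. Since $\lambda>0$, the matrix $\K+c\lambda I$ is positive definite and invertible, which gives \eqref{eq:h:multi-modal} and shows that the stationary point is unique.

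The score formula follows immediately. Plugging $\h$ back into $w^{[v]}$ gives $S_i^{[v]}=(w^{[v]})^\top x_i^{[v]}=\sum_j \Ks^{[v]}_{ij}\h_j=[\Ks^{[v]}(\K+\lambda I)^{-1}\delta]_i$, which is \eqref{eq:final_score}. I would also check that $\delta_i\ge1$ for every $i$, because each domain has at least one modality. The single-modality case $V=1$, $\delta=1_k$ recovers the earlier unimodal result, which serves as a consistency check.

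The step that needs care is the constant $c$. Read literally, the penalty $-\frac{\lambda}{2}\h_i^2$ sits inside the sum over $v$, which gives $c=V$ and the matrix $\K+V\lambda I$. I would handle this by writing the regularization per domain once, as the statement implicitly assumes. Equivalently, I would absorb the factor $V$ into $\lambda$ by reparametrizing $\lambda\leftarrow\lambda/V$, which is harmless because $\lambda$ is a free hyperparameter. After that, the displayed formulas hold exactly as written.
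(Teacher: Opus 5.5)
Your proposal is correct and follows the same route as the paper's proof: stationarity in $w^{[v]}$ and $\alpha$, elimination of $w^{[v]}=\sum_i \alpha_i x_i^{[v]}$, solving $(K_{\mathrm{MM}}+\lambda I)\alpha=\delta$, and back-substituting to get $S_i^{[v]}$. Your concern about the constant $c$ is justified. The paper's proof writes $\partial J_{\text{MM}}/\partial \alpha_i = \sum_v(\delta_i^{[v]}-(w^{[v]})^\top x_i^{[v]})-\lambda\alpha_i$, silently dropping the factor $V$ that the literal objective \eqref{eq:mm_objective} produces; your rescaling $\lambda\leftarrow\lambda/V$ (or counting the penalty once per domain) is exactly what makes the stated formulas hold, and your positive-definiteness argument supplies the invertibility and uniqueness the paper leaves implicit.
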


A high score $S_i^{[v]}$ indicates that the $v$-th modality of domain $D_i$ has a large component along the cross-modal consensus direction expressed through multi-modal coupling coefficients $\h$.
To obtain the final domain distribution, we aggregate the scores $S_i^{[v]}$ across all modalities for each domain $D_i$. 
The resampling distribution $\myalpha$ for VLM training is then obtained by softmax-normalizing the scores:
$\myalpha_i = \frac{\exp(\sum_{v=1}^V S_i^{[v]})}{\sum_{j=1}^k \exp(\sum_{v=1}^V S_j^{[v]})}$.

\hfill
\begin{remark}[Spectral characterization]
The multi-modal kernel matrix \(K_{\mathrm{MM}}\) induces a spectral decomposition of the domain relationships.
Let \(K_{\mathrm{MM}} = U \Sigma U^\top\) with eigenvalues \(\{\sigma_j\}_{j=1}^k\) and eigenvectors \(\{u_j\}_{j=1}^k\). In \cref{supp:subsec:spectral_soft}, we show that the resulting score for domain \(D_i\) admits the expansion $S_i = \sum_{j=1}^k \frac{\sigma_j}{\sigma_j + \lambda} \, (u_j^\top \delta)\,(u_j)_i$.
The \ours score applies a spectral soft-thresholding to the multi-modal kernel: directions associated with large eigenvalues (signal-dominated components) are preserved, and small-eigenvalue directions (noise-dominated components) are attenuated.
\end{remark}

\paragraph{Computational complexity \& practical implementation.}
Our algorithm is summarized in \cref{alg:mm-algo}. Computing embeddings $x_i^{[v]}$ requires a cheap inference pass through the model from the previous stage.
The kernel score computation \eqref{eq:final_score} is computationally cheap given the typically small number of domains $k$ used in VLM training
(detailed complexity analysis is given in \cref{remark:cost_of_score}).
Notably, \ours operates independently of the VLM optimization procedure, enabling noninvasive integration into diverse pipelines by simply adjusting sampling weights.
This approach is a key advantage in the VLM setting where many differing training pipelines are commonly used.

%% file: fig/pipeline.tex
\begin{figure}[t]
    \centering
    \includegraphics[width=\textwidth]{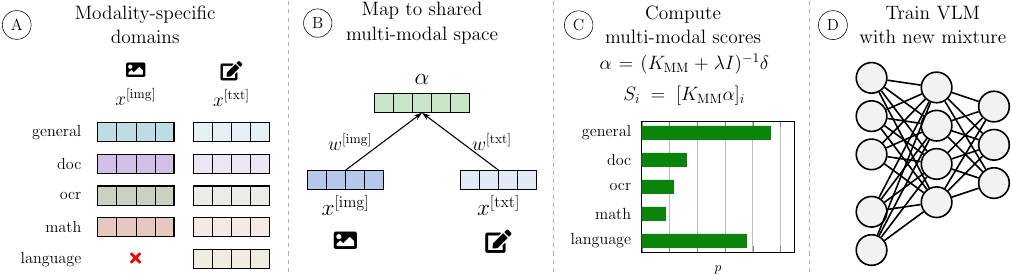}
    \caption{\textbf{Pipeline of multi-modal data mixing for VLM training.}
    \textcircled{\scriptsize A} Modality-specific embeddings $x_i^{[v]}$ are extracted from the midstage trained model for each domain.
    Some domains may lack certain modalities (e.g., the language domain has no image data). 
    \textcircled{\scriptsize B} The $k$ domains are then mapped to a shared multi-modal space by the coupling latent variables $\h$ of the multi-modal alignment objective \eqref{eq:mm_objective}.
    \textcircled{\scriptsize C} The multi-modal kernel matrix $\K$ is computed as the pairwise inner products between domain embeddings across modalities via \eqref{eq:h:multi-modal}.
    Finally, \eqref{eq:final_score} is applied to $\K$ and $\h$ to obtain score $S_i, \, i=1,\ldots,k$ indicating the multi-modal alignment of each domain.
    A resampling non-uniform distribution $p$ is obtained by softmax-normalizing the scores.
    \textcircled{\scriptsize D} Finally, image-text instruction tuning of the target VLM is carried out by sampling according to the obtained data mixture $p$.
    }
    \label{fig:pipeline}
\end{figure}

%% file: sections/8-Experiments.tex
We conduct a comprehensive empirical evaluation of our multi-modal data mixing method for visual instruction tuning of LLaVA-OneVision~\citep{li2024llavaonevisioneasyvisualtask} across diverse VLM benchmarks.
We follow the standard domain construction of~\citep{li2024llavaonevisioneasyvisualtask}, with each domain corresponding to a target VLM skill
providing an ideal testbed for data mixing strategies \citep{laurenccon2024matters,dong2025scalablevisionlanguagemodel}. Furthermore, the data incorporate text, image, and video modalities and realistically reflect practical challenges such as the inconsistent availability of modalities across domains.

First, we evaluate our method on the stage-2 image-text instruction tuning~\citep{li2024llavaonevisioneasyvisualtask}, which contains five domains including text and image modalities, and compare average accuracy on multiple benchmarks to other mixing baselines.
\ours accelerates training at expert-tuned performance with marginal computational cost.
Further, we explore the transferability of our domain weights across model sizes and architectures on Qwen-VL.
Then, we introduce an additional video modality,
showing that our algorithmic mixing naturally extends to more complex multi-modal settings, yielding consistent improvements and providing an efficient, scalable alternative to costly expert tuning.

\paragraph{Training setup.}  
We train LLaVA-OneVision 0.5B and 7B models with batch size 128, sequence length 8192, and learning rate $10^{-5}$ with cosine decay. 
In \cref{subsec:exp:text_image}, models are trained for 4500 steps following \cite{li2024llavaonevisioneasyvisualtask} s.t. each example is used only once.
The training data consists of five domains: General, Doc/Chart/Screen, Math/Reasoning, General OCR, and Language. 
The first four domains are structured as image-text pairs, while the Language domain includes text data only, lacking the image modality.
In \cref{subsec:exp:add_video}, we introduce an additional VideoQA domain with video-text data and train for 3000 steps to further test our method's multi-modal capabilities.

\paragraph{Baselines.}
\textsc{Uniform} is the cost-free mixture assigning equal weights $p_i = \frac{1}{k}$, which, despite its simplicity, can be a strong baseline~\citep{michel2021balancing,fan_doge_2024}.
\textsc{Human} corresponds to the domain weights manually optimized by the authors of \citep{li2024llavaonevisioneasyvisualtask}.
\textsc{Text}, \textsc{Image}, and \textsc{Video} represent weights derived solving \cref{eq:primal} based on embeddings from a single modality. If a domain lacks a specific modality, its corresponding weight is set to zero.
\textsc{Avg} averages the domain weights of all single modalities. For example, $\textsc{Avg} = \frac{1}{2} (\textsc{Text}+\textsc{Image})$ in \cref{subsec:exp:text_image} and $\textsc{Avg} = \frac{1}{3} (\textsc{Text}+\textsc{Image} + \textsc{Video})$ in \cref{subsec:exp:add_video}.
Moreover, \textsc{Fused} are the domain weights computed from the fused multi-modal embedding, which is generated by the VLM after processing all modalities as a unified sequence.
\ours computes the domain weights through \cref{eq:final_score}.
The processes of embedding extraction and domain weight assignment are detailed in \cref{supp:subsec:embedding_extract}.

\paragraph{Evaluation benchmarks.} 
We use various benchmarks for VLM evaluation in diverse tasks
and they can be categorized into three classes following~\citep{li2024llavaonevisioneasyvisualtask}:
\textit{(1) Chart, Diagram, and Document Understanding}.
Charts and diagrams are key formats for visual information expression. We evaluate on AI2D~\citep{kembhavi2016ai2d}, ChartQA~\citep{masry2022chartqa}, DocVQA~\citep{mathew2021docvqa}, and InfoVQA~\citep{mathew2022infographicvqa}, and OCRBench~\citep{liu2024ocrbench} for text recognition.
\textit{(2) Perception and Multi-discipline Reasoning}.
For more complex visual detection scenarios, we also evaluate on more challenging multi-disciplinary visual-language reasoning tasks. Specifically, we follow the multi-modal benchmarks of MME~\citep{yin2023survey}, MMBench~\citep{liu2023mmbench}, and reasoning benchmarks including MathVerse~\citep{zhang2024mathverse}, MMMU~\citep{yue2023mmmu}, and ScienceQA~\citep{lu2022learn}.
\textit{(3) Real-world Understanding and Multi-modal Chatbots}. 
We also benchmark the capability of VLMs as a general-purpose assistant in the real world with specific benchmarks, including RealworldQA~\citep{grok15v} and MMStar~\citep{chen2024we}.
In \cref{tab:eval_video}, we add two video benchmarks: Video-MMMU~\citep{hu2025video} and MVBench~\citep{li2024mvbench}.
We use the LLMs-Eval library \citep{zhang2024lmms} for evaluation.

\subsection{\ours improves both 0.5B and 7B VLMs}
\label{subsec:exp:text_image}
We train LLaVA-OneVision-0.5B using the domain reweighting strategies discussed in Baselines above during the single-image (i.e., no video) training phase. 
The domain weights are visualized in \cref{fig:weights_acc} (top) and listed in \cref{supp:subsec:domain_weights}.
These models are evaluated on ten diverse benchmarks, presenting the 0-shot accuracy in \cref{tab:eval_si_0.5B}.
Our \ours achieves the highest average score across all benchmarks, bringing a 1.24\% improvement over \textsc{Uniform}.
{Importantly, it even surpasses \textsc{Human} that requires expensive and non-scalable grid searches on given domains, while our method can find mixtures with \emph{negligible computational cost}.}
Remarkably, \ours learns faster: as shown in \cref{fig:weights_acc} (bottom), it outperforms \textsc{Uniform} with just 56\% steps and matches \textsc{Human} with 78\% steps, corresponding to $1.8\times$ and $1.28\times$ speedup factors, respectively.

For further analysis in \cref{tab:eval_si_0.5B}, \ours outperforms 1) \textsc{Avg} that handles modalities independently, and 2) \textsc{Fused} that uses the fused embeddings from VLM with all available modalities as input. 
Moreover, \ours also surpasses unimodal strategies that ignore the information from other modalities, as demonstrated in \cref{supp:subsec:acc_single_modality}.
This indicates the importance of distinctly considering the contributions of each modality and addressing the missing modal data specifically.
Moreover, our ablation studies in \cref{supp:ablations} demonstrate the robustness of \ours's domain weights. 
We further ablate with an Orthogonal Score variant explicitly prioritizing feature dissimilarity, as shown in \cref{supp:subsec:orthogonal_score}. The observed results suggests that emphasizing domain heterogeneity alone is less effective in this setting than the signal captured by our score.

In addition, the downweighted domains do not result in sacrificing the model's corresponding capabilities. 
Specifically, although \ours downweights Math and OCR compared to \textsc{Uniform}, it preserves the capabilities on MathVerse and OCRBench in \cref{tab:eval_si_0.5B}.
This suggests that our method supports positive transfer across domains: \textit{emphasizing a subset of high-score domains can promote emergent capabilities in others}, even if they receive less training weight.

\input{fig/weights_acc}


\begin{table*}[t]
    \centering
    \caption{\textbf{Comparison of data mixing strategies for LLaVA-0.5B image-text instruction tuning.}
    Results are reported as 0-shot accuracy across ten evaluation benchmarks.
    We compare our \ours against baselines: \textsc{Uniform} (equal weights), \textsc{Human} (manual weights), \textsc{Avg} (averaged single-modality weights), and \textsc{Fused} (weights from input concatenation).
    }
    \begin{tabularx}{\textwidth}{lCCCCC}
    \toprule 
    Benchmark & \textsc{Uniform} & \textsc{Human} & \textsc{Avg} & \textsc{fused} & \ours \\
    \midrule
        AI2D        & $42.78_{\pm 0.04}$ & $43.75_{\pm 0.01}$ & $45.50_{\pm 0.02}$ & $44.59_{\pm 0.05}$ & $43.52_{\pm 0.09}$ \\[1mm]
        DocVQA      & $42.90_{\pm 0.02}$ & $42.66_{\pm 0.00}$ & $42.44_{\pm 0.03}$ & $42.67_{\pm 0.01}$ & $42.92_{\pm 0.02}$ \\[1mm]
        InfoVQA     & $22.25_{\pm 0.03}$ & $22.61_{\pm 0.07}$ & $22.43_{\pm 0.04}$ & $23.50_{\pm 0.03}$ & $22.13_{\pm 0.05}$ \\[1mm]
        MathVerse   & $18.27_{\pm 0.03}$ & $17.26_{\pm 0.11}$ & $18.32_{\pm 0.06}$ & $19.29_{\pm 0.08}$ & $18.91_{\pm 0.07}$ \\[1mm]
        MMBench     & $36.34_{\pm 0.00}$ & $40.21_{\pm 0.04}$ & $39.86_{\pm 0.08}$ & $37.71_{\pm 0.12}$ & $42.44_{\pm 0.04}$ \\[1mm]
        MMStar      & $33.45_{\pm 0.06}$ & $36.04_{\pm 0.10}$ & $33.50_{\pm 0.14}$ & $34.44_{\pm 0.20}$ & $35.88_{\pm 0.03}$ \\[1mm]
        MMMU        & $30.00_{\pm 0.16}$ & $29.67_{\pm 0.31}$ & $29.00_{\pm 0.09}$ & $29.22_{\pm 0.21}$ & $29.78_{\pm 0.16}$ \\[1mm]
        ScienceQA   & $62.42_{\pm 0.02}$ & $65.84_{\pm 0.02}$ & $64.80_{\pm 0.04}$ & $63.46_{\pm 0.09}$ & $64.50_{\pm 0.01}$ \\[1mm]
        OCRBench    & $45.30_{\pm0.05}$ & $44.60_{\pm 0.09}$ & $45.30_{\pm 0.06}$ & $43.50_{\pm 0.09}$ & $45.80_{\pm 0.05}$ \\[1mm]
        RealworldQA & $46.27_{\pm 0.18}$ & $44.05_{\pm 0.06}$ & $45.49_{\pm 0.10}$ & $45.36_{\pm 0.12}$ & $46.54_{\pm 0.06}$ \\[1mm]
    \midrule
       \rowcolor{gray!15} Average     & $38.00_{\pm 0.09}$ & $38.67_{\pm 0.12}$ & $38.66_{\pm 0.08}$ & $38.37_{\pm 0.12}$ & $\mathbf{39.24}_{\pm 0.08}$ \\[1mm]
       \rowcolor{gray!15} Number over \textsc{Uniform} & - & $5/10$ & $6/10$ & $6/10$ & $8/10$  \\
    \bottomrule
    \end{tabularx}   
    \label{tab:eval_si_0.5B}
\end{table*}

\paragraph{Domain weights transfer to larger VLMs.}
Recent research on data mixing in text-only LLMs shows that domain weights derived from smaller models can be effectively transferred to larger ones \citep{xie_doremi_2023,fan_doge_2024,liu2024regmix}. 
We investigate this phenomenon in VLMs by training 7B models with the domain weights obtained from 0.5B models.
The evaluation results are presented in \cref{tab:eval_si_7B}.
Remarkably, \ours maintains its performance advantage over baselines even at this increased model scale, outperforming \textsc{Uniform} on 8 out of the 10 benchmarks.

\paragraph{Marginal computational cost.}
\begin{wraptable}{r}{0.38\linewidth}
\vspace{-14pt}
\centering
\caption{Computational cost is negligible relative to full model training. Cost in H100 GPU hours.}
\label{tab:cost}
\vspace{-4.5pt}
\begin{tabular}{l c}
\toprule
\textbf{Component} & \textbf{Cost (h)} \\
\midrule
Embedding extraction & 0.58 \\
Score computation & 0.01 \\
Total & 0.59 \\
\midrule
Training (0.5B) & 90 \\
Training (7B) & 620 \\
\bottomrule
\end{tabular}
\vspace{-5pt}
\end{wraptable}
The computational complexity of our method is discussed in~\cref{sec:method}.
In practice,
\textit{(i)} embedding extraction is a fast inference-only process. In our experiments, this step takes 35 minutes on a single H100 GPU.
\textit{(ii)} Alignment score computation via \eqref{eq:final_score} completes in seconds since the number of domains is small.
The cost of our weight computation is marginal compared to the 90 and 620 GPU hours required to train 0.5B and 7B VLMs, respectively.
Our approach substantially reduces the need for expensive, time-consuming manual tuning of data mixtures, which is a key bottleneck in current VLM development.

\begin{table*}[t]
    \centering
    \caption{\textbf{Transfer weights from LLaVA-0.5B to LLaVA-7B for image-text instruction tuning.}
    Results are reported as 0-shot accuracy across ten benchmarks.
    \ours improves performance on 8 out of 10 benchmarks over \textsc{Uniform}.
    }
    \begin{tabularx}{\textwidth}{lCCCCC}
    \toprule
    Benchmark & \textsc{Uniform} & \textsc{Human} & \textsc{Avg} & \textsc{Fused} & \ours \\
    \midrule
        AI2D        & $74.48_{\pm 0.04}$ & $74.03_{\pm 0.11}$ & $75.10_{\pm 0.08}$ & $75.74_{\pm 0.05}$ & $75.58_{\pm 0.09}$ \\[1mm]
        DocVQA      & $57.91_{\pm 0.08}$ & $58.64_{\pm 0.05}$ & $58.28_{\pm 0.12}$ & $57.29_{\pm 0.15}$ & $58.32_{\pm 0.03}$ \\[1mm]
        InfoVQA     & $34.76_{\pm 0.15}$ & $35.91_{\pm 0.09}$ & $36.95_{\pm 0.07}$ & $36.06_{\pm 0.11}$ & $36.23_{\pm 0.18}$ \\[1mm]
        MathVerse   & $29.31_{\pm 0.09}$ & $26.85_{\pm 0.14}$ & $27.33_{\pm 0.18}$ & $28.68_{\pm 0.06}$ & $28.55_{\pm 0.12}$ \\[1mm]
        MMBench     & $75.69_{\pm 0.02}$ & $76.12_{\pm 0.03}$ & $76.23_{\pm 0.05}$ & $75.77_{\pm 0.08}$ & $75.74_{\pm 0.06}$ \\[1mm]
        MMStar      & $49.04_{\pm 0.11}$ & $50.26_{\pm 0.16}$ & $50.44_{\pm 0.09}$ & $49.46_{\pm 0.14}$ & $50.19_{\pm 0.10}$ \\[1mm]
        MMMU        & $46.33_{\pm 0.21}$ & $46.78_{\pm 0.18}$ & $46.78_{\pm 0.22}$ & $46.78_{\pm 0.17}$ & $46.89_{\pm 0.15}$ \\[1mm]
        ScienceQA   & $87.31_{\pm 0.06}$ & $90.38_{\pm 0.02}$ & $89.53_{\pm 0.04}$ & $85.52_{\pm 0.09}$ & $90.23_{\pm 0.07}$ \\[1mm]
        OCRBench    & $56.80_{\pm 0.13}$ & $57.30_{\pm 0.08}$ & $56.70_{\pm 0.11}$ & $56.60_{\pm 0.08}$ & $57.90_{\pm 0.14}$ \\[1mm]
        RealworldQA & $58.17_{\pm 0.10}$ & $57.91_{\pm 0.12}$ & $56.99_{\pm 0.14}$ & $57.65_{\pm 0.10}$ & $57.47_{\pm 0.05}$ \\[1mm]
    \midrule
        \rowcolor{gray!15} Average     & $56.98_{\pm 0.11}$ & $57.42_{\pm 0.11}$ & $57.43_{\pm 0.13}$ & $56.96_{\pm 0.11}$ & $\mathbf{57.71}_{\pm 0.11}$ \\[1mm]
        \rowcolor{gray!15} Number over \textsc{Uniform} & - & $7/10$ & $7/10$ & $5/10$ & $8/10$  \\
    \bottomrule
    \end{tabularx}    
    \label{tab:eval_si_7B}
\end{table*}

\subsection{Scale to more complex tri-modal settings}
\label{subsec:exp:add_video}
We further demonstrate the flexibility of \ours in more complex multimodal scenarios by adding a VideoQA domain that introduces video–text data.
This creates a total of six domains with three modalities: text, image, and video.
Our domain weights for this new configuration are shown in \cref{fig:weights_acc_video} (top) and fully reported in \cref{supp:subsec:domain_weights_video}.
We train 0.5B and 7B models with new domain weights and evaluate models on both image-only benchmarks (same as \cref{subsec:exp:text_image}) and two video benchmarks, MVBench \citep{li2024mvbench} and Video-MMMU \citep{hu2025video}.

{The results in \cref{tab:eval_video} demonstrate that \ours maintains its superiority over \textsc{Uniform} in the more complex tri-modal setting.}
Notably, \ours matches \textsc{Uniform} performance in only 33\% steps, as shown in \cref{fig:weights_acc_video} (bottom), resulting in a $3\times$ average speedup.
In addition, \ours outperforms both \textsc{Avg} and \textsc{Fused} on average.
This validates the effectiveness of our multi-modal construction compared to single-modality mixtures and simple early fusion.
Crucially, this experiment highlights the \emph{extensibility} 
of our method to richer multi-modal configurations where manual expert-tuning becomes increasingly impractical.

\input{fig/weights_acc_video}

\begin{table}[!ht]
    \centering
    \caption{\textbf{Comparison of data mixtures for LLaVA-0.5B/7B video-image-text instruction tuning.}
    Results are reported as 0-shot accuracy across twelve evaluation benchmarks.
    \ours achieves the best average performance on two model sizes.
    Results with standard deviations are in \cref{supp:subsec:acc_video_std}.
    }
    \begin{tabularx}{\textwidth}{
    l 
    >{\hsize=0.9\hsize}C 
    >{\hsize=0.9\hsize}C 
    >{\hsize=0.9\hsize}C 
    >{\hsize=1.3\hsize}C
    >{\hsize=0.9\hsize}C 
    >{\hsize=0.9\hsize}C 
    >{\hsize=0.9\hsize}C 
    >{\hsize=1.3\hsize}C
    }
    \toprule
    \multirow{2}{*}{Benchmark} & \multicolumn{4}{c}{0.5B} & \multicolumn{4}{c}{7B} \\
    \cmidrule(lr){2-5} \cmidrule(lr){6-9}
    & \textsc{Unif.} & \textsc{Avg} & \textsc{Fused} & \ours & \textsc{Unif.} & \textsc{Avg} & \textsc{Fused} & \ours \\
    \midrule
        AI2D        & $41.68$ & $42.81$ & $42.84$ & $42.88$ & $71.83$ & $72.41$ & $72.83$ & $72.15$ \\[1mm]
        DocVQA      & $42.20$ & $41.68$ & $41.29$ & $42.54$ & $56.47$ & $56.42$ & $55.67$ & $57.51$ \\[1mm]
        InfoVQA     & $21.65$ & $21.97$ & $21.17$ & $22.40$ & $35.74$ & $34.65$ & $34.40$ & $35.89$ \\[1mm]
        MathVerse   & $15.61$ & $15.62$ & $17.77$ & $15.10$ & $25.63$ & $25.52$ & $24.75$ & $26.40$ \\[1mm]
        MMBench     & $34.36$ & $26.80$ & $35.14$ & $34.45$ & $71.05$ & $75.52$ & $73.28$ & $74.57$ \\[1mm]
        MMStar      & $30.43$ & $35.54$ & $36.14$ & $33.97$ & $48.18$ & $49.03$ & $46.55$ & $48.79$ \\[1mm]
        MMMU        & $30.00$ & $29.78$ & $30.44$ & $29.78$ & $45.67$ & $45.11$ & $44.78$ & $45.56$ \\[1mm]
        ScienceQA   & $60.29$ & $60.29$ & $59.40$ & $61.03$ & $83.44$ & $86.07$ & $83.29$ & $87.26$ \\[1mm]
        OCRBench    & $45.30$ & $43.20$ & $46.60$ & $45.00$ & $56.50$ & $56.90$ & $57.60$ & $57.20$ \\[1mm]
        RealworldQA & $47.19$ & $46.41$ & $46.27$ & $47.32$ & $57.91$ & $56.99$ & $59.22$ & $57.39$ \\[1mm]
        Video-MMMU  & $13.78$ & $13.78$ & $12.78$ & $13.84$ & $29.78$ & $30.56$ & $29.11$ & $30.33$ \\[1mm]
        MVBench     & $36.67$ & $36.50$ & $37.02$ & $40.70$ & $52.73$ & $51.58$ & $53.12$ & $53.60$ \\[1mm]
    \midrule
        \rowcolor{gray!15} Average     & $34.93$ & $34.53$ & $34.74$ & $\mathbf{35.75}$ & $52.91$ & $53.39$ & $52.88$ & $\mathbf{54.40}$ \\[1mm]
        \rowcolor{gray!15} \# over \textsc{Unif.} & - & 4/12 & 7/12 & 9/12 & - & 6/12 & 5/12 & 10/12 \\
    \bottomrule
    \end{tabularx}
    \label{tab:eval_video}
\end{table}

\paragraph{Domain weights transfer to Qwen-VL-2B.}
To explore the cross-architecture transferability of \ours, we apply domain weights obtained from LLaVA-0.5B to Qwen-VL-2B \citep{Qwen2-VL} in this tri-modal setup.
The results in \cref{supp:subset:qwen-vl} show that \ours maintains its benefit compared with other baselines also on Qwen-VL-2B.
This transferability suggests that \ours captures intrinsic data characteristics that generalize across architectures, indicating its potential utility for broader model development.

%% file: fig/weights_acc.tex
\begin{figure}[t]
    \centering
    \includegraphics[width=\textwidth]{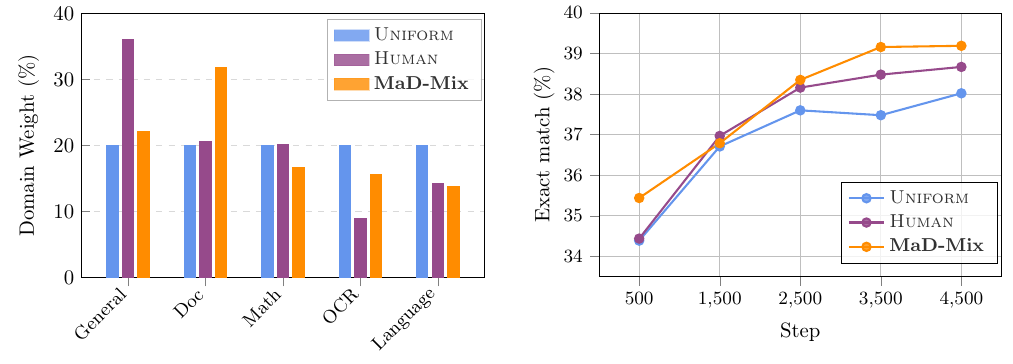}
    \caption{
        \textbf{Comparison of different data mixture strategies in the image-text instruction tuning.} (Left) Domain weights for \textsc{uniform}, \textsc{human}, and \ours. (Right) Zero-shot average downstream accuracy of 0.5B models, where \ours achieves consistent improvement.
    }
    \label{fig:weights_acc}
\end{figure}

%% file: fig/weights_acc_video.tex
\begin{figure}[t]
    \centering
    \includegraphics[width=\textwidth]{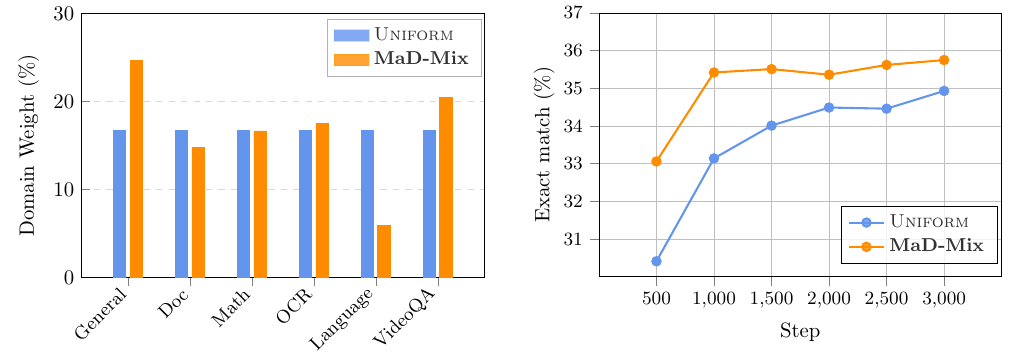}
    \caption{
        \textbf{Comparison of different data mixtures in the video-image-text instruction tuning.}
        (Left) Domain weights for \textsc{uniform} and \ours. (Right) Zero-shot average downstream accuracy of 0.5B models, where \ours outperforms \textsc{Uniform} during the whole training process.
    }
    \label{fig:weights_acc_video}
\end{figure}

%% file: sections/9-Conclusion.tex
This paper presents a principled approach to the key problem of algorithmically determining sampling weights across pre-defined domains for vision-language model training with negligible additional computational cost. 
Our formulation through modality-aware fusion with coupling inter-modal variables addresses fundamental challenges in VLM training: handling missing modalities, learning cross-modal features, and determining domain weights without costly grid searches. 
Empirical evaluations demonstrate that our method improves training efficiency compared to both uniform and manually-tuned mixtures across diverse VLM benchmarks on average. 
Our approach allows direct integration with existing diverse VLM training pipelines and makes it valuable for practical applications, offering a path towards more data- and compute-efficient VLM training.

%% file: sections/10-Appendix.tex
\section{Problem formulation} \label{supp:problem_formulation}

\subsection{Formulation for single modality setting}
\label{supp:subsec:single_modality}
Let the data mixture problem consist of $k$ data domains and their domain embeddings $x_i \in \R^d$ with their target $y_i\in\R$, $i=1,\ldots,k$.
We first write the primal domain scoring problem for single modality:
\begin{equation} 
\label{eq:primal:app}
\min_{w,e} \frac{1}{2\lambda} \sum_{i=1}^k e_i^2 + \frac{1}{2} \|w\|^2 \quad \text{s.t. } e_i = y_i - w^\top x_i, \, i=1,\ldots,k,
\end{equation}
where $w \in \R^d$, $e = [e_1, \ldots, e_k]\in \R^k$ are the projections, $\lambda > 0$ is a regularization constant.

From the Lagrangian with dual variables $\nu$:
$$
    \cL(w,e;\nu) = \frac{1}{2\lambda} \sum_{i=1}^k e_i^2 + \frac{1}{2} \|w\|^2 - \sum_{i=1}^k \nu_i (e_i - y_i + w^\top x_i),
$$
one takes the conditions for optimality, which are given as
$$
\left\{
\begin{aligned}
\frac{\partial \mathcal{L}}{\partial w} &= w - \sum_{i=1}^k \nu_i x_i = 0 
  \quad \implies \quad w = \sum_{i=1}^k \nu_i x_i, \\[1ex]
\frac{\partial \mathcal{L}}{\partial e_i} &= \frac{1}{\lambda} e_i - \nu_i = 0 
  \quad \implies \quad e_i = \lambda \nu_i, \quad \forall i \\[1ex]
\frac{\partial \mathcal{L}}{\partial \nu_i} &= e_i - y_i + w^\top x_i = 0 
  \quad \implies \quad \lambda \nu_i - y_i + \sum_{i=j}^k \nu_j x_j^\top x_i = 0 \quad \forall i.
\end{aligned}
\right.
$$
Eliminating $w$ in the last condition gives the dual solution:
$$
\begin{aligned}
K \nu = y - \lambda \nu, \\
(K+\lambda I)\nu = y, \\
\nu = (K+\lambda I)^{-1} y,
\end{aligned}
$$
where we defined the kernel matrix as $K = [x_i^\top x_j]_{i,j=1}^k$, and the target vector $y=[y_1,\dots, y_k]^\top$.

We are now ready to define the score of domain $i$ as $S_i' = w^\top x_i$ in its kernel form:
\begin{equation} \label{eq:krls:app}
S_i' = w^\top x_i 
    = \left(\sum_{j=1}^k \nu_j x_j\right)^\top x_i 
    = \left[ K(K+\lambda I)^{-1} y \right]_i.
\end{equation}

\subsubsection{Primal and dual score representations}
We can write \cref{eq:primal:app} in the unconstrained form:
\[
\min_{w} \; \frac{1}{2\lambda} \sum_{i=1}^k (y_i - w^\top x_i)^2 + \frac{1}{2} \|w\|^2.
\]
This is a ridge regression problem where the target vector is $y=[y_1,\dots,y_k]^\top$.
Let \(X\) be the \(k \times d\) data matrix with rows \(x_1^\top, x_2^\top, \dots, x_k^\top\). Then the objective becomes:
\[
\min_{w} \; \frac{1}{2\lambda} \| y - Xw \|^2 + \frac{1}{2} \|w\|^2.
\]
The solution to this ridge regression problem is:
\[
w = (X^\top X + \lambda I)^{-1} X^\top y.
\]
The score for domain $i$ is $S_i' = w^\top x_i$. The vector of scores can be computed as $S' = Xw$. Substituting the expression for $w$:
$$
S_i' = [X (X^\top X + \lambda I)^{-1} X^\top y]_i.
$$
This is equivalent to \eqref{eq:krls:app} by standard matrix identity, i.e., Woodbury identity.
The following remark summarizes the computational aspect of the primal and dual representations of the score.

\begin{remark}[Efficient computation of the score]
\label{remark:cost_of_score}
    The primal solution is written in terms of the covariance $X^\top X$, while the dual solution is in terms of the kernel matrix $XX^\top$.
    In the context of data mixture with large VLMs, the embedding dimension $d$ may be very large, so it is computationally advantageous to work in the dual with complexity $\cO(k^3)$ where the number of data domains $k$ is typically much smaller.
    We report the computational time of the defined score on a single A100 GPU in \cref{tab:time_alignment_score} as supporting evidence.
\end{remark}

\begin{table}[ht]
\centering
\caption{Computational cost is negligible. Time on a single A100 GPU.}
\label{tab:cost}
\begin{tabular}{l c}
\toprule
\textbf{Number of Domains} & \textbf{Time (s)} \\
\midrule
10 & 0.07 \\
100 & 0.09 \\
1000 & 0.48 \\
10000 & 21.85 \\
\bottomrule
\end{tabular}
\label{tab:time_alignment_score}
\end{table}

\subsection{Introducing latent variables}
\label{supp:subsec:latent_space}
We first give a lower bound to the objective \eqref{eq:primal:app} and introduce latent variables $\hs_i$, which will be used to couple the domains in the multi-modal setting.
Starting from the primal single-modal problem \eqref{eq:primal:app}, the following lower bound holds:
\begin{equation} \label{eq:rkm:app}
\begin{aligned}
J &= \frac{1}{2\lambda} \sum_{i=1}^k e_i^2 + \frac{1}{2} \norm{w}_\fro^2 \quad \text{s.t. } e_i = y_i - w^\top x_i, \, i=1,\ldots,k \\
&\geq \sum_{i=1}^k e_i \hs_i - \frac{\lambda}{2} \norm{\hs}_\fro^2 + \frac{1}{2} \norm{w}_\fro^2  \\
&= \sum_{i=1}^k (y_i - w^\top x_i) \hs_i - \frac{\lambda}{2} \norm{\hs}_\fro^2 + \frac{1}{2} \norm{w}_\fro^2 =: J_\text{SM},
\end{aligned}
\end{equation}
where $\lambda > 0$ is a regularization constants and $J_\text{SM}$ is the single modality objective.
The above bound is based on the property that for two arbitrary vectors $e, \hs$ one has
$\frac{1}{2\lambda}e^2+\frac{\lambda}{2}{\hs}^2 \geq e \hs, \, \forall e, \hs \in \mathbb{R}^k$.
The inequality can be verified using the Schur complement by writing in its quadratic form:
\begin{equation}
\frac{1}{2}
\begin{bmatrix}
  e^T & {\hs}^\top
\end{bmatrix}
\begin{bmatrix}
  \frac{1}{\lambda} I & I \\
  I & \lambda I
\end{bmatrix}
\begin{bmatrix}
  e \\
  \hs
\end{bmatrix}
\ge 0.
\notag
\end{equation}
From the Schur complement, it states the condition $\frac{1}{2} (\lambda I - I(\lambda I)I) \geq 0$, which proves the above inequality.
This is also known as conjugate feature duality \citep{suykens2017deep} or the Fenchel--Young inequality for quadratic functions \citep{rockafellar1974conjugate}.

Through the inequality, we have introduced latent variables, i.e. $\hs_i$, into the objective.
We proceed by studying the stationary condition of $J_\text{SM}$.
\begin{equation} \label{eq:rkm:stationary:app}
\left\{
\begin{aligned}
\frac{\partial J_\text{SM}}{\partial w} &= -\sum_{i=1}^k \hs_i x_i + w = 0 
  \quad \Rightarrow \quad w = \sum_{i=1}^k \hs_i x_i, \\[1ex]
\frac{\partial J_\text{SM}}{\partial \hs_i} &= y_i - w^\top x_i - \lambda \hs_i = 0 
  \quad \Rightarrow \quad \hs_i = \frac{1}{\lambda} \left( y_i - w^\top x_i \right) \quad \forall i.
\end{aligned}
\right.
\end{equation}
By eliminating $w$ in \eqref{eq:rkm:stationary:app}, we obtain
$$
w^\top x_i = \left( \sum_{j=1}^k \hs_j x_j \right)^\top x_i = \sum_{j=1}^k \hs_j\, (x_j^\top x_i) \quad \forall i.
$$
Thus the solution in the latent variables is
$$
\begin{aligned}
& \hs_i = \frac{1}{\lambda} \left( y_i - \sum_{j=1}^k \hs_j\, (x_j^\top x_i) \right) \\
& \hs = (K + \lambda I)^{-1} y.
\end{aligned}
$$
The score of domain $i$, i.e., \( S_i = w^\top x_i \), writes in terms of the latent variables as:
\begin{equation}
\begin{aligned}
S_i' = w^\top x_i 
= \left( \sum_{j=1}^k \hs_j x_j \right)^\top x_i = \sum_{j=1}^k \hs_j (x_j^\top x_i) 
= [K (K + \lambda I)^{-1} y]_i,
\end{aligned}
\end{equation}
which matches \eqref{eq:krls:app} obtained by the original problem \eqref{eq:primal:app}.
Let $y=1_k$, it recovers the uni-modal score in \Cref{subsec:alignment_score}.

\subsection{Proof of \Cref{prop:multi-modal}}
\label{supp:subsec:multi-modality}
We first characterize the stationary points of $J_\text{MM}$ defined in \cref{eq:mm_objective}, as the stationary conditions lead to the optimal solution in the dual of the multi-modal problem.
Note that the coupling across modalities can be achieved by creating a common latent space~\citep{houthuys2018multi,tao2024tensor}, i.e., by introducing the same latent variables $\alpha$ across all modalities in $J_\text{MM}$.
By taking the partial derivatives of the weights $w^{[v]}$ and the latent variables $\h$, the conditions of the stationary points leading to MM scores are characterized by:
\begin{equation}
\left\{
\begin{aligned}
\frac{\partial J_\text{MM}}{\partial w^{[v]}} &= -\sum_{i=1}^k \h_i x_i^{[v]} + w^{[v]} = 0 \quad \implies \quad w^{[v]} = \sum_{i=1}^k \h_i x_i^{[v]} \\
\frac{\partial J_\text{MM}}{\partial \h_i} &= \sum_{v=1}^V \Big( \delta_i^{[v]} - (w^{[v]})^\top x_i^{[v]} \Big) - \lambda \h_i = 0 \\
& \Rightarrow \sum_{v=1}^V \delta_i^{[v]} - \sum_{v=1}^V \Bigg( \sum_{j=1}^k \h_j \underbrace{(x_j^{[v]})^\top x_i^{[v]}}_{K_{ij}^{[v]}} \Bigg) - \lambda \h_i = 0\\
& \Rightarrow \sum_{v=1}^V \delta_i^{[v]}  - \sum_{j=1}^k \h_j \sum_{v=1}^V K_{ij}^{[v]} - \lambda \h_i = 0 \\
& \Rightarrow \sum_{v=1}^V \delta_i^{[v]}  - \sum_{j=1}^k \h_j K_{\mathrm{MM}_{ij}} - \lambda \h_i = 0, \text{where } K_{\mathrm{MM}_{ij}}^{[v]} = \sum_{v=1}^V K^{[v]}_{ij}.
\end{aligned}
\right.
\end{equation}

Define the multi-modal kernel matrix as $\K \in \mathbb{R}^{k \times k}$ with entries $\Ks_{\mathrm{MM}_{\scriptstyle ij}} = \sum_{v=1}^V \Ks_{ij}^{[v]}$, with $\Ks_{ij}^{[v]} =(x_i^{[v]})^\top x_j^{[v]}$.
The above conditions can be rewritten in matrix form as:
$$(\K + \lambda I) \h = \delta, $$
where  $\delta \in \R^{k}$ is the vector with entries $\delta_{i} = \sum_{v=1}^V \delta_i^{[v]}$ with $\delta_i^{[v]}\in\{0,1\}$ representing the existence of the modality $v$ of the domain $i$.
The solution in the latent variable therefore is
\[
\h = (\K + \lambda {I})^{-1} \delta.
\] 
We can compute the domain score for each modality as $S^{[v]}_i = {w^{[v]}}^\top x_i^{[v]}$.
For modality $v$, at optimality:
\[
w^{[v]} = \sum_{j=1}^k \h_j x_j^{[v]} \quad\Rightarrow\quad
S^{[v]}_i = {w^{[v]}}^\top x_i^{[v]}= \sum_{j=1}^k \h_j (x_j^{[v]})^\top x_i^{[v]}.
\]
Substituting $\h = (\K + \lambda I)^{-1} \delta. $, it yields in matrix form:
\[
S_i^{[v]} = \left[\Ks^{[v]} (\K+\lambda I)^{-1} \delta\right]_i,
\]
which is the multi-modal score of domain $i$ for modality $v$.
The ensemble score of domain $i$ then considers all modalities as $S_i = \sum_{v=1}^V S_i^{[v]}$.

\subsection{Eigendecomposition perspective}
\label{supp:subsec:spectral_soft}
Alignment score functions as a robust steering direction.
In fact, we can assert the following:

\begin{lemma}
Let $K_{MM} \in \mathbb{R}^{k \times k}$ be the multi-modal kernel matrix with SVD ${K}_{MM} = {U} {\Sigma} {U}^\top$, where $U=[u_1, \ldots, u_k]$ are the singular vectors and $\sigma_1 \geq \dots \geq \sigma_k$ are the singular values. 
The score $S_i$ derived from the objective in \cref{prop:multi-modal} is given by:
$S_i = \sum_{j=1}^k \left( \frac{\sigma_j}{\sigma_j + \lambda} \right) ({u}_j^\top \delta) ({u}_j)_i$.
\end{lemma}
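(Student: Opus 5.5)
The plan is to aggregate the per-modality scores of \cref{prop:multi-modal} into a single matrix expression, and then diagonalize that expression in the eigenbasis of $\K$. By \eqref{eq:final_score}, the ensemble score is $S_i=\sum_{v=1}^V S_i^{[v]}=\sum_{v=1}^V[\Ks^{[v]}(\K+\lambda I)^{-1}\delta]_i$. Since matrix multiplication is linear in the left factor and $\sum_{v}\Ks^{[v]}=\K$ by definition, this collapses to
\begin{equation}
S_i=\left[\K(\K+\lambda I)^{-1}\delta\right]_i .
\end{equation}
So the lemma reduces to a spectral identity for the matrix function $f(\K)$ with $f(t)=t/(t+\lambda)$.

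Next I will establish the structure of $\K$. Each $\Ks^{[v]}=X^{[v]}(X^{[v]})^\top$ is a Gram matrix, where $X^{[v]}$ stacks the rows $(x_i^{[v]})^\top$, with zero rows for missing modalities. Hence each $\Ks^{[v]}$ is symmetric positive semidefinite, and so is their sum $\K$. By the spectral theorem, $\K=U\Sigma U^\top$ with $U$ orthogonal and $\Sigma=\mathrm{diag}(\sigma_1,\dots,\sigma_k)$, $\sigma_j\ge 0$. For a symmetric PSD matrix this eigendecomposition is also an SVD: the singular values coincide with the eigenvalues and the left and right singular vectors coincide. This justifies the notation $U\Sigma U^\top$ used in the statement.

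With this decomposition, $\K+\lambda I=U(\Sigma+\lambda I)U^\top$. Because $\lambda>0$ and $\sigma_j\ge 0$, every diagonal entry $\sigma_j+\lambda$ is strictly positive, so the matrix is invertible with $(\K+\lambda I)^{-1}=U(\Sigma+\lambda I)^{-1}U^\top$. Using $U^\top U=I$,
\begin{equation}
\K(\K+\lambda I)^{-1}=U\,\mathrm{diag}\!\left(\tfrac{\sigma_j}{\sigma_j+\lambda}\right)U^\top=\sum_{j=1}^k\frac{\sigma_j}{\sigma_j+\lambda}\,u_ju_j^\top .
\end{equation}
Applying this to $\delta$ and taking the $i$-th coordinate gives $S_i=\sum_{j}\frac{\sigma_j}{\sigma_j+\lambda}(u_j^\top\delta)(u_j)_i$, which is the claimed expansion.

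The computation itself is routine, so the only delicate step is the identification of the SVD with the eigendecomposition. A generic SVD of a symmetric matrix may use different left and right singular vectors, for example through sign flips within repeated or zero singular values. I will therefore explicitly choose $U$ from the spectral theorem and rely on PSD-ness to guarantee nonnegative eigenvalues. Equivalently, one can note that the final expression depends only on the spectral projectors of $\K$, so it is independent of the choice of basis within each eigenspace.
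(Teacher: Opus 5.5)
Your proposal is correct and follows essentially the same route as the paper: reduce the aggregated score to $S = K_{\mathrm{MM}}(K_{\mathrm{MM}}+\lambda I)^{-1}\delta$, then diagonalize in the orthonormal eigenbasis of $K_{\mathrm{MM}}$ so that $\Sigma(\Sigma+\lambda I)^{-1}$ supplies the filter factors $\sigma_j/(\sigma_j+\lambda)$. You are also more careful than the paper. It simply asserts the aggregation step, and it takes for granted both the invertibility of $K_{\mathrm{MM}}+\lambda I$ and the identification of the stated SVD with an eigendecomposition; you justify each of these from linearity in $\sum_v K^{[v]} = K_{\mathrm{MM}}$ and from positive semidefiniteness of the Gram matrices.
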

\begin{proof}
By Proposition~\ref{prop:multi-modal}, the multi-modal score vector can be written as $S = K_{MM}(K_{MM}+\lambda I)^{-1}\delta$. Let $K_{MM} = U \Sigma U^\top$ be the eigendecomposition, where $U = [u_1,\dots,u_k]$ is orthonormal and $\Sigma = \mathrm{diag}(\sigma_1,\dots,\sigma_k)$. Then
$K_{MM}(K_{MM}+\lambda I)^{-1}
= U\Sigma U^\top U(\Sigma+\lambda I)^{-1}U^\top
= U\Sigma(\Sigma+\lambda I)^{-1}U^\top,$
where $\Sigma(\Sigma+\lambda I)^{-1}
= \mathrm{diag}\!\left(\tfrac{\sigma_1}{\sigma_1+\lambda},\dots,
\tfrac{\sigma_k}{\sigma_k+\lambda}\right)$. Writing
$\delta = \sum_{j=1}^k (u_j^\top\delta)u_j$ in the eigenbasis of $K_{MM}$, we obtain
$S
= \sum_{j=1}^k \frac{\sigma_j}{\sigma_j+\lambda}(u_j^\top\delta)\,u_j.$
Taking the $i$-th component yields
$S_i
= \sum_{j=1}^k \left(\frac{\sigma_j}{\sigma_j+\lambda}\right)
(u_j^\top\delta)\,(u_j)_i$.
\end{proof}

Therefore, our construction applies a spectral soft thresholding filter to the domain distribution.
The associated operator dampens the noisy directions (small eigenvalues) and effectively measures the projection of domain $i$ onto a robust semantic subspace (corresponding to large eigenvalues).

\section{Additional experiments} \label{supp:experiments}

\subsection{Difference between image and text modalities}
\label{supp:subsec:gram_matix}
In the multi-modality training process, there are two main challenges from the data perspective: \textit{(i)} domains may have different modalities, and \textit{(ii)} each domain's data features may vary significantly as captured by different modalities.

We take the LLaVA-OneVision dataset \citep{llava_onevision_data} as an example.
The LLaVA-OneVision dataset includes five domains along with two modalities, image and text. 
Four domains include both image and text modalities, while the ``Language'' domain only has text.
We visualize the embedding similarity matrix for text and image modalities independently in \cref{fig:gram_matrices}. 
It shows that the domain relationships in different modalities can vary considerably, although the kernel magnitudes remain comparable.
Note that the kernel matrices can be normalized to unit trace \citep{bach2004computing} if scales differed significantly.

\begin{figure}[h!]
    \centering
    \begin{subfigure}[t]{0.49\linewidth}
        \includegraphics[width=\linewidth]{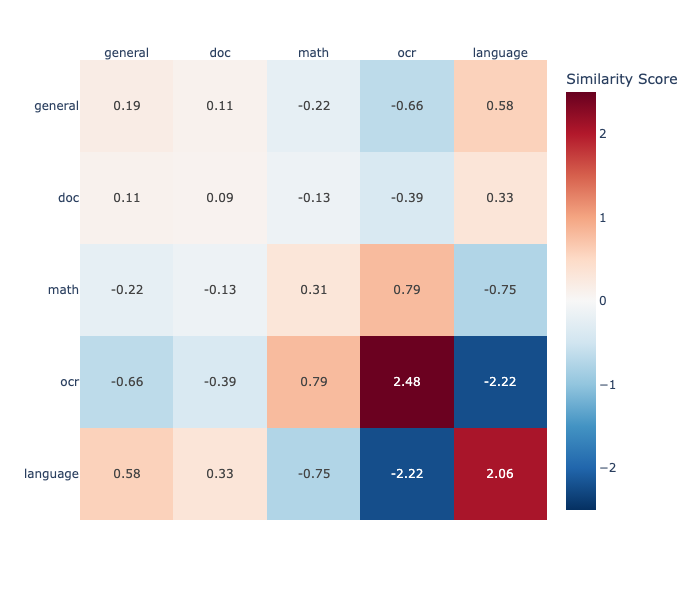}
        \caption{Text modality.}
        \label{fig:text_gram_matrix}
    \end{subfigure}%
    \hfill
    \begin{subfigure}[t]{0.49\linewidth}
        \includegraphics[width=\linewidth]{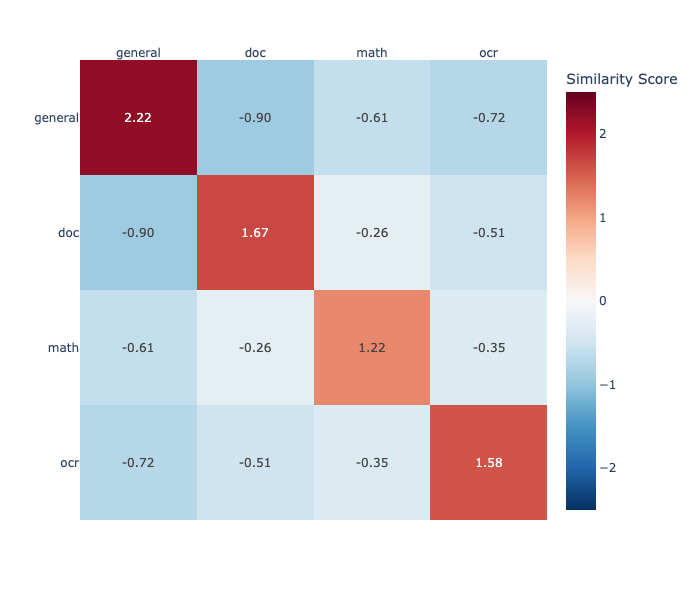}
        \caption{Image modality.}
        \label{fig:image_gram_matrix}
    \end{subfigure}
    \caption{Embedding kernel similarity matrix for different modalities.}
    \label{fig:gram_matrices}
\end{figure}

\subsection{Experimental setup} \label{supp:setups}
We use the LLaVA-OneVision publicly available data \citep{llava_onevision_data} for training and follow the domain segmentation in the LLaVA-OneVision paper \citep{li2024llavaonevisioneasyvisualtask}.
We use 3 seeds for the benchmark evaluations and report the standard deviations.

Note that some training datasets used in~\citep{li2024llavaonevisioneasyvisualtask} were not released and some datasets use different naming conventions than~\citep{li2024llavaonevisioneasyvisualtask}.
Our specific domain settings are: 

    $\bullet$ \textbf{General:} %
        aokvqa (cauldron,llava\_format) \citep{schwenk2022okvqa},
        clevr (cauldron,llava\_format) \citep{johnson2017clevr},
        hateful\_memes (cauldron,llava\_format) \citep{kiela2020hateful},
        image\_textualization (filtered) \citep{pi2024image},
        iconqa (cauldron,llava\_format) \citep{lu2021iconqa},
        IconQA (MathV360K) \citep{lu2021iconqa},
        scienceqa (cauldron,llava\_format) \citep{saikh2022scienceqa},
        scienceqa (nona\_context) \citep{saikh2022scienceqa},
        st\_vqa (cauldron,llava\_format) \citep{xia2023st},
        tallyqa (cauldron,llava\_format) \citep{acharya2019tallyqa},
        VisualWebInstruct (filtered) \citep{jia2025visualwebinstruct},
        visual7w (cauldron,llava\_format) \citep{zhu2016visual7w},
        vistext (cauldron) \citep{tang2023vistext},
        VizWiz (MathV360K) \citep{gurari2018vizwiz},
        vqarad (cauldron,llava\_format) \citep{lau2018dataset},
        vsr (cauldron,llava\_format) \citep{liu2023visual},
        websight (cauldron) \citep{laurençon2024unlocking},
        allava\_instruct\_laion4v \citep{chen2024allava},
        allava\_instruct\_vflan4v \citep{chen2024allava},
        vision\_flan (filtered) \citep{xu2024vision},
        intergps (cauldron,llava\_format) \citep{lu2021inter},
        llavar\_gpt4\_20k \citep{zhang2023llavar},
        sharegpt4o \citep{chen2024sharegpt4v},
        sharegpt4v (coco) \citep{chen2024sharegpt4v},
        sharegpt4v (knowledge) \citep{chen2024sharegpt4v},
        sharegpt4v (llava) \citep{chen2024sharegpt4v},
        sharegpt4v (sam) \citep{chen2024sharegpt4v} \\
    $\bullet$ \textbf{Doc/Chart/Screen:} %
        ai2d (cauldron,llava\_format) \citep{kembhavi2016ai2d},
        ai2d (gpt4v) \citep{kembhavi2016ai2d},
        ai2d (internvl) \citep{kembhavi2016ai2d},
        chart2text (cauldron) \citep{kantharaj2022chart},
        chartqa (cauldron,llava\_format) \citep{masry2022chartqa},
        diagram\_image\_to\_text (cauldron),
        dvqa (cauldron,llava\_format) \citep{kafle2018dvqa},
        figureqa (cauldron,llava\_format) \citep{kahou2017figureqa},
        hitab (cauldron,llava\_format) \citep{cheng2021hitab},
        infographic\_vqa \citep{mathew2022infographicvqa},
        infographic\_vqa\_llava\_format \citep{mathew2022infographicvqa},
        screen2words (cauldron) \citep{wang2021screen2words},
        tqa (cauldron,llava\_format) \citep{kembhavi2017you},
        ureader\_cap \citep{ye2023ureader},
        ureader\_ie \citep{ye2023ureader},
        robut\_sqa (cauldron) \citep{ghosh2024robusttabularqamodels},
        robut\_wikisql (cauldron) \citep{ghosh2024robusttabularqamodels},
        robut\_wtq (cauldron,llava\_format) \citep{ghosh2024robusttabularqamodels},
        visualmrc (cauldron) \citep{tanaka2021visualmrc},
        infographic (gpt4v) \citep{mathew2022infographicvqa},
        lrv\_chart \citep{liu2023aligning},
        mapqa (cauldron,llava\_format) \citep{chang2022mapqa},
        multihiertt (cauldron) \citep{zhao2022multihiertt}
        \\
    $\bullet$ \textbf{Math/Reasoning:}  %
        CLEVR-Math (MathV360K) \citep{lindstrom2022clevr},
        FigureQA (MathV360K) \citep{kahou2017figureqa},
        GEOS (MathV360K) \citep{seo2015solving},
        GeoQA+ (MathV360K) \citep{anand2024geovqa},
        Geometry3K (MathV360K) \citep{lu2021inter},
        MapQA (MathV360K) \citep{chang2022mapqa},
        Super-CLEVR (MathV360K) \citep{li2023super},
        TabMWP (MathV360K) \citep{lu2022dynamic},
        UniGeo (MathV360K) \citep{chen2022unigeo},
        geo170k (align) \citep{gao2023g},
        geo170k (qa) \citep{gao2023g},
        geomverse (cauldron) \citep{kazemi2023geomverse},
        mavis\_math\_metagen \citep{zhang2024mavis},
        mavis\_math\_rule\_geo \citep{zhang2024mavis},
        lrv\_normal (filtered) \citep{liu2023aligning},
        geo3k \citep{lu2021inter},
        raven (cauldron) \citep{zhang2019raven},
        PMC-VQA (MathV360K) \citep{zhang2023pmc},
        tabmwp (cauldron) \citep{lu2022dynamic}
        \\
    $\bullet$ \textbf{General OCR:}  %
        chrome\_writting \citep{WendlerGambot2023RenderedText},
        hme100k \citep{yuan2022syntax},
        iam (cauldron) \citep{marti2002iam},
        iiit5k \citep{mishra2012scene},
        k12\_printing,
        rendered\_text (cauldron) \citep{WendlerGambot2023RenderedText},
        textcaps \citep{sidorov2020textcaps},
        textocr (gpt4v) \citep{singh2021textocr},
        sroie,
        orand\_car\_a \\    
    $\bullet$ \textbf{Language:}  %
        magpie\_pro (l3\_80b\_mt),
        magpie\_pro (l3\_80b\_st),
        magpie\_pro (qwen2\_72b\_st) \citep{xu2024magpie} \\
    $\bullet$ \textbf{Video:} academic\_qa, youtube \citep{zhang2024videoinstructiontuningsynthetic}, ActivityNetQA \citep{yu2019activityqa}, NeXT-QA \cite{xiao2021next}, PerceptionTest \citep{patraucean2023perception}

\subsection{Embedding extraction and domain weight assignment}
\label{supp:subsec:embedding_extract}

For embedding computation, we use the pretrained LLaVA-OneVision model that has completed stage-1.5 pre-training and we randomly sample a subset of data from each domain. Given the presence of multiple datasets per domain, we extracted embeddings for 512 samples from each individual dataset. These sample embeddings were then averaged to create a single representation for each dataset. Subsequently, we averaged these dataset-level embeddings to capture the overall character of its respective domain.
Then, we use domain-level embeddings to compute domain weights.

Once we compute the domain weights $p_i$ using \cref{alg:mm-algo}, our training sampling strategy takes dataset size into account as follows. 
We sample datasets proportionally to their size within each domain, and then sample individual data points uniformly from the chosen dataset. This results in the final sampling probability for a dataset $DS$ in domain $D_i$ being $P = \frac{|DS|}{|D_i|}p_i$,
followed by uniformly sampling over instances in $DS$.

\subsection{Domain weights for the image-text instruction tuning (\cref{subsec:exp:text_image})}
\label{supp:subsec:domain_weights}
We report domain weights for \cref{subsec:exp:text_image} with five domains and two modalities in \cref{tab:domain_weights}.
Note that \textsc{avg} = $\frac{1}{2}$ (\textsc{Text}+\textsc{Image}).
\textsc{Image$^\dag$} sets its Language weight as same as \textsc{Human} and reweight the others in \textsc{Image}.

\begin{table}[ht]
    \centering
    \caption{\textbf{VLM Mixtures for the image-text instruction tuning.} Domain weights of different mixing strategies.
    \textsc{Image$^\dag$} sets its Language weight as same as \textsc{Human} and reweight the others in \textsc{Image}.
    }
    \begin{tabular}{lccccccc|c}
    \toprule
       Domain & \textsc{Uni.} & \textsc{Human} & \textsc{Text} & \textsc{Image} & \textsc{Avg} & \textsc{Fused} & \ours & \textsc{Image$^\dag$} \\
    \midrule
       General & 20.00 & 36.10 & 20.90 & 35.66 & 28.28 & 14.74 & 22.09 & 30.56 \\
       Doc/Chart/Screen & 20.00 & 20.60  & 43.28 & 29.49 & 36.29 & 40.95 & 31.86 & 25.27\\
       Math/Reasoning & 20.00 & 20.10 & 15.24 & 17.92 & 16.58 & 20.21 & 16.63 & 15.36 \\
       General OCR & 20.00 & 8.90  & 10.22 & 16.93 & 13.58 & 14.14 & 15.66 & 14.51 \\
       Language & 20.00 & 14.30  & 10.35 & 0.00 & 5.18 & 9.95 & 13.76 & 14.30 \\
    \bottomrule
    \end{tabular}
    \label{tab:domain_weights}
\end{table}

\subsection{Performance of domain weights computed by single modality}
We add two unimodal strategies, \textsc{Text} and \textsc{Image$^\dag$}, in \cref{tab:eval_si_0.5B_supp,tab:eval_si_7B_supp} as addition for \cref{tab:eval_si_0.5B,tab:eval_si_7B}.
These two unimodal methods compute the domain weights derived from single modality in \Cref{subsec:alignment_score}, based solely on text or image embeddings. 
Note that the Language domain does not have image data, thus \textsc{Image} has 0\% on this domain.
For a more reasonable comparison, we set its Language domain weight to the same as \textsc{Human} and reweight the others, finalizing to \textsc{Image$^\dag$} in \cref{tab:domain_weights}.
Importantly, \ours still outperforms these unimodal strategies.

\label{supp:subsec:acc_single_modality}
\begin{table}[ht]
    \centering
    \caption{\textbf{Comparison of data mixing strategies for LLaVA-0.5B image-text instruction tuning.}
    Results are reported as 0-shot accuracy across ten evaluation benchmarks.
    \ours achieves the best average performance, including the single-modality methods.
    }
    \begin{tabular}{lccccc|cc}
    \toprule 
    Benchmark & \textsc{Uniform} & \textsc{Human} & \textsc{Avg} & \textsc{fused} & \ours & \textsc{Text} & \textsc{Image$^\dag$} \\
    \midrule
        AI2D        & 42.78 & 43.75 & 45.50 & 44.59 & 43.52 & 45.95 & 44.33 \\
        DocVQA      & 42.90 & 42.66 & 42.44 & 42.67 & 42.92 & 43.08 & 42.42 \\
        InfoVQA     & 22.25 & 22.61 & 22.43 & 23.50 & 22.13 & 23.45 & 21.47 \\
        MathVerse   & 18.27 & 17.26 & 18.32 & 19.29 & 18.91 & 16.50 & 18.53 \\
        MMBench     & 36.34 & 40.21 & 39.86 & 37.71 & 42.44 & 35.82 & 39.00 \\
        MMStar      & 33.45 & 36.04 & 33.50 & 34.44 & 35.88 & 34.19 & 34.67 \\
        MMMU        & 30.00 & 29.67 & 29.00 & 29.22 & 29.78 & 27.89 & 30.67 \\
        ScienceQA   & 62.42 & 65.84 & 64.80 & 63.46 & 64.50 & 64.60 & 63.86 \\
        OCRBench    & 45.30 & 44.60 & 45.30 & 43.50 & 45.80 & 45.30 & 45.20 \\
        RealworldQA & 46.27 & 44.05 & 45.49 & 45.36 & 46.54 & 45.36 & 46.67 \\
    \midrule
        \rowcolor{gray!15} Average   & 38.00 & 38.67 & 38.66 & 38.37 & \textbf{39.24} & 38.21 & 38.69 \\
        \rowcolor{gray!15} \# over \textsc{Uniform} & - & 5/10 & 6/10 & 6/10 & 8/10 & 5/10 & 7/10  \\
    \bottomrule
    \end{tabular}    
    \label{tab:eval_si_0.5B_supp}
\end{table}

\begin{table}[!ht]
    \centering
    \caption{\textbf{Comparison of data mixing strategies for LLaVA-7B image-text instruction tuning.}
    Results are reported as 0-shot accuracy across ten evaluation benchmarks.
    \ours achieves the best average performance, including single-modality methods.
    }
    \begin{tabular}{lccccc|cc}
    \toprule
    Benchmark & \textsc{Uniform} & \textsc{Human} & \textsc{Avg} & \textsc{Fused} & \ours & \textsc{Text} & \textsc{Image$^\dag$} \\
    \midrule
        AI2D        & 74.48 & 74.03 & 75.10 & 75.74 & 75.58 & 75.42 & 75.58 \\
        DocVQA      & 57.91 & 58.64 & 58.28 & 57.29 & 58.32 & 58.73 & 57.86 \\
        InfoVQA     & 34.76 & 35.91 & 36.95 & 36.06 & 36.23 & 36.83 & 36.22 \\
        MathVerse   & 29.31 & 26.85 & 27.33 & 28.68 & 28.55 & 26.14 & 27.83 \\
        MMBench     & 75.69 & 76.12 & 76.23 & 75.77 & 75.74 & 75.60 & 76.98 \\
        MMStar      & 49.04 & 50.26 & 50.44 & 49.46 & 50.19 & 49.51 & 50.72 \\
        MMMU        & 46.33 & 46.78 & 46.78 & 46.78 & 46.89 & 47.11 & 45.67 \\
        ScienceQA   & 87.31 & 90.38 & 89.53 & 85.52 & 90.23 & 86.91 & 90.08 \\
        OCRBench    & 56.80 & 57.30 & 56.70 & 56.60 & 57.90 & 57.70 & 57.50 \\
        RealworldQA & 58.17 & 57.91 & 56.99 & 57.65 & 57.47 & 57.65 & 57.39 \\
    \midrule
        \rowcolor{gray!15} Average   & 56.98 & 57.42 & 57.43 & 56.96 & \textbf{57.71} & 57.16 & 57.59 \\
        \rowcolor{gray!15} \# over \textsc{Uniform} & - & 7/10 & 7/10 & 5/10 & 8/10 & 6/10 & 6/10  \\
    \bottomrule
    \end{tabular}    
    \label{tab:eval_si_7B_supp}
\end{table}

\subsection{Ablation studies} \label{supp:ablations}

\paragraph{Regularization parameter $\lambda$.}
The parameter $\lambda$ is related to the degree of regularization. 
Despite this control, 
\cref{tab:domain_weights_sweep} demonstrates that our obtained domain weights are largely stable with respect to changes in $\lambda$.

\paragraph{Number of samples for embedding extraction.}
As we discussed in \cref{supp:subsec:embedding_extract}, we sample a subset of datasets for embedding extraction. We test the robustness of domain weights with respect to the number of samples. 
The domain weights based on 256, 512, or 1024 samples from each individual dataset are reported in \cref{tab:domain_weights_sweep}, which confirms that the domain weights obtained are stable regardless of the number of samples.

\paragraph{Embedding aggregation.}
Except for averaging the dataset-level averaged embeddings to represent each domain, 
another way is to aggregate dataset-level embeddings to domain embeddings according to their dataset sizes.
Basically, sum the dataset-level embeddings reweighted by their sizes as domain weights.
The domain weights computed by these two strategies are highly similar, as reported in \cref{tab:domain_weights_sweep}.

\paragraph{Model sizes.}
We use the domain weights obtained from the LLaVA-0.5B model's embeddings and test the transferability for LLaVA-7B shown in \cref{tab:eval_si_7B}.
We compute domain weights based on the embeddings from LLaVA-7B as well. The result reported in \cref{tab:domain_weights_sweep} demonstrates that domain weights are stable across model sizes.

\paragraph{The number of training steps for the pretrained model.}
Regarding the sufficiency of feature extraction, the latent representations are from the mid-stage checkpoint of LLaVA-OneVision \citep{li2024llavaonevisioneasyvisualtask}, which is a well-trained model.
We additionally conducted a stability analysis by continuing training this checkpoint on its public mid-training data for 500 and 1000 additional steps and recomputed domain weights using our method.
The results, presented in \cref{tab:domain_weights_sweep}, demonstrate that the domain weights remain stable throughout training.

\paragraph{Number of domains.} 
We run additional experiments with a reduced number of domains. We exclude `General' from the original five domains, and the new domain weights obtained by \ours are: 26.7\% Doc/Chart/Screen, 28.7\% Math/Reasoning, 31.6\% General OCR, and 13.0\% Language. The domain weights of \textsc{Uniform} are 25\% per domain. \cref{tab:eval_si_4domain} demonstrates that \ours consistently shows a higher average accuracy. 
This validates the robustness of our method across different numbers of domains.
Furthermore, the experiment in \cref{subsec:exp:add_video}, which introduces a Video domain, demonstrates that \ours remains effective as the number of domains changes.

\begin{table*}[ht]
    \centering
    \caption{\textbf{Domain weights across $\lambda$ values, number of samples, model sizes, and the number of steps for pretrained checkpoints.}
    Our method is robust to the choice of $\lambda$, the number of samples used, embedding aggregation methods, model sizes and the number of steps.
    }
    \begin{tabular}{l | ccc | ccc | cc }
        \toprule
        \multirow{2}{*}{Domain}  & \multicolumn{3}{c|}{\textbf{$\lambda$ Values}} & \multicolumn{3}{c|}{\textbf{Number of Samples}} & \multicolumn{2}{c}{\textbf{Aggregate embeddings}} \\
         & 1 & 10 & 100 & 256 & 512 & 1024  & Equally & Dataset sizes\\
        \midrule
       General & 21.57 & 22.09 & 23.14 & 24.48  & 22.09 & 23.96 & 22.09 & 23.20 \\
       Doc/Chart/Screen & 28.90 & 31.86 & 34.79 & 27.32 & 31.86 & 30.84 & 31.86 & 30.98 \\
       Math/Reasoning & 17.47 & 16.63 & 15.95 & 18.71  & 16.63 & 17.32 & 16.63 & 17.80 \\
       General OCR & 16.74 & 15.66 & 14.49 & 16.72 & 15.66 & 17.04 & 15.66 & 16.91 \\
       Language & 15.32 & 13.76 & 11.64 & 12.77 &  13.76 & 10.84 & 13.76 & 11.11 \\
        \bottomrule
    \end{tabular}
    \begin{tabular}{l | cc | ccc }
        \toprule
       \multirow{2}{*}{Domain}  & \multicolumn{2}{c|}{\textbf{Model sizes}} & \multicolumn{3}{c}{\textbf{\# training steps for pretrained model}} \\
        & 0.5B & 7B & Pretrained model & +500 steps & +1000 steps  \\
        \midrule
       General & 22.09 & 21.15 & 22.09 & 21.65 & 22.59  \\
       Doc/Chart/Screen & 31.86 & 29.91 & 31.86 & 30.12 & 29.96 \\
       Math/Reasoning & 16.63 & 19.64 & 16.63 & 18.74 & 18.82 \\
       General OCR & 15.66 & 17.06 & 15.66 & 15.17 & 15.82  \\
       Language & 13.76 & 12.23 & 13.76 & 14.33 & 12.99  \\
        \bottomrule
    \end{tabular}
    \label{tab:domain_weights_sweep}
\end{table*}

\begin{table}[ht]
\centering
\caption{\textbf{Comparison of data mixtures on 4 domains for LLaVA-0.5B image-text instruction tuning.} \ours is robust across different numbers of domains.}
\label{tab:eval_si_4domain}
\begin{tabular}{lcc}
\toprule
\textbf{Benchmark} & \textsc{Uniform} & \ours \\
\midrule
AI2D & 42.75 & 43.75 \\
DocVQA & 40.79 & 41.71 \\
InfoVQA & 22.89 & 22.96 \\
MathVerse & 17.51 & 18.27 \\
MMBench & 30.76 & 33.59 \\
MMStar & 35.68 & 33.24 \\
MMMU & 28.89 & 31.78 \\
ScienceQA & 53.99 & 54.09 \\
OCRBench & 43.30 & 44.70 \\
RealworldQA & 38.30 & 42.88 \\
\midrule
\rowcolor{gray!15} Average & 35.48 & \textbf{36.69} \\
\rowcolor{gray!15} Number over \textsc{Uniform} & - & 9/10 \\
\bottomrule
\end{tabular}
\end{table}

\subsection{Alignment score vs Orthogonal score}
\label{supp:subsec:orthogonal_score}
To further ablate the benefit introduced in \cref{sec:method}, we introduce the ``Orthogonal Score'' to quantify the uniqueness of different domains, i.e., downweighting high-alignment domains.
Specifically for the domain $j$, set $e_i = \gamma_i-w^\top x_i$, with $\gamma_i=1$ if $i=j$, $0$ otherwise in \cref{eq:primal}.
And \cref{eq:mm_objective} becomes 
$$J_{\text{MM}}^{\text{orth}} = \sum_{v=1}^V \sum_{i=1}^k \Big[ (\delta_i^{[v]} \gamma_i - (w^{[v]})^\top x_i^{[v]}) \alpha_i - \frac{\lambda}{2} \alpha_i^2 \Big] + \frac{1}{2} \sum_{v=1}^V \|w^{[v]}\|_\mathrm{F}^2.$$
The orthogonal score derived by this new objective is $$S_j^{[v]} = \delta_j \left[K^{[v]} (K_{\text{MM}}+\lambda I)^{-1}\right]_{jj}.$$

We report the downstream performance of domain weights computed through Orthogonal score in \Cref{tab:eval_orthogonal_score}.
For comparison, we also present the original scores used in our method (\ours) below. 
The performance of \ours shows higher average accuracy than Orthogonal score.
Importantly, Orthogonal score performs worse than UNIFORM on average (37.62 vs 38.00).

\begin{table}[ht]
\centering
\caption{\textbf{Comparison of Orthogonal and Alignment scores for LLaVA-0.5B image-text instruction tuning.} Orthogonal score is even worse than UNIFORM (37.62 vs 38.00).}
\label{tab:eval_orthogonal_score}
\begin{tabular}{lcc}
\toprule
\textbf{Benchmark} & Orthogonal score & Alignment score (\ours) \\
\midrule
AI2D & 43.04 & 43.52 \\
DocVQA & 41.58 & 42.92 \\
InfoVQA & 21.49 & 22.13 \\
MathVerse & 15.99 & 18.91 \\
MMBench & 32.65 & 42.44 \\
MMStar & 34.52 & 35.88 \\
MMMU & 30.22 & 29.78 \\
ScienceQA & 64.25 & 64.50 \\
OCRBench & 46.30 & 45.80 \\
RealworldQA & 46.14 & 46.54 \\
\midrule
\rowcolor{gray!15} Average & 37.62 & \textbf{39.24} \\
\bottomrule
\end{tabular}
\end{table}

\subsection{Domain weights for video-image-text instruction tuning (\cref{subsec:exp:add_video})}
\label{supp:subsec:domain_weights_video}
We report domain weights for \cref{subsec:exp:add_video} with six domains and three modalities in \cref{tab:domain_weights_video}.
Note that \textsc{avg} = $\frac{1}{3}$ (\textsc{Text}+\textsc{Image}+\textsc{Video}).
\begin{table}[!ht]
    \centering
    \caption{\textbf{VLM Mixtures.} Domain weights across different mixing strategies for three modalities.
    }
    \begin{tabular}{lccccccc}
    \toprule
       Domain & \textsc{Uniform} & \textsc{Text} & \textsc{Image} & \textsc{Video} & \textsc{Avg} & \textsc{Fused} & \ours \\
    \midrule
       General & 16.67 & 16.62 & 35.66  & 0.00 & 17.43 & 10.77 & 24.66 \\
       Doc/Chart/Screen & 16.67 & 14.42 & 29.49 & 0.00 & 16.70 & 13.20 & 14.74 \\
       Math/Reasoning & 16.67 & 30.73 & 17.92 & 0.00 & 16.22 & 9.44 & 16.65 \\
       General OCR & 16.67 & 9.27 & 16.93 & 0.00 & 8.73 & 38.60 & 17.55 \\
       Language & 16.67 & 13.89 & 0.00 & 0.00 & 4.63 & 16.73 & 5.91 \\
       Video & 16.67 & 15.08 & 0.00 & 100.00 & 38.36 & 11.26 & 20.49 \\
    \bottomrule
    \end{tabular}
    \label{tab:domain_weights_video}
\end{table}

\subsection{\cref{tab:eval_video} with standard deviations}
\label{supp:subsec:acc_video_std}
We show the results with standard deviations of \cref{tab:eval_video} in \cref{tab:eval_video_0.5B,tab:eval_video_7B}.

\begin{table}[!ht]
    \centering
    \caption{\textbf{Comparison of data mixtures for LLaVA-0.5B video-image-text instruction tuning.}}
    \begin{tabularx}{\textwidth}{lCCCC}
    \toprule
    Benchmark & \textsc{Uniform} & \textsc{Avg} & \textsc{Fused} & \ours \\
    \midrule
        AI2D        & $41.68_{\pm 0.08}$ & $42.81_{\pm 0.09}$ & $42.84_{\pm 0.10}$ & $42.88_{\pm 0.04}$ \\
        DocVQA      & $42.20_{\pm 0.06}$ & $41.68_{\pm 0.05}$ & $41.29_{\pm 0.06}$ & $42.54_{\pm 0.08}$ \\
        InfoVQA     & $21.65_{\pm 0.07}$ & $21.97_{\pm 0.06}$ & $21.17_{\pm 0.08}$ & $22.40_{\pm 0.10}$ \\
        MathVerse   & $15.61_{\pm 0.10}$ & $15.62_{\pm 0.14}$ & $17.77_{\pm 0.11}$ & $15.10_{\pm 0.08}$ \\
        MMBench     & $34.36_{\pm 0.02}$ & $26.80_{\pm 0.03}$ & $35.14_{\pm 0.06}$ & $34.45_{\pm 0.04}$ \\
        MMStar      & $30.43_{\pm 0.05}$ & $35.54_{\pm 0.08}$ & $36.14_{\pm 0.06}$ & $33.97_{\pm 0.04}$ \\
        MMMU        & $30.00_{\pm 0.15}$ & $29.78_{\pm 0.09}$ & $30.44_{\pm 0.13}$ & $29.78_{\pm 0.11}$ \\
        ScienceQA   & $60.29_{\pm 0.11}$ & $60.29_{\pm 0.10}$ & $59.40_{\pm 0.12}$ & $61.03_{\pm 0.09}$ \\
        OCRBench    & $45.30_{\pm 0.12}$ & $43.20_{\pm 0.07}$ & $46.60_{\pm 0.08}$ & $45.00_{\pm 0.15}$ \\
        RealworldQA & $47.19_{\pm 0.18}$ & $46.41_{\pm 0.16}$ & $46.27_{\pm 0.12}$ & $47.32_{\pm 0.10}$ \\
        Video-MMMU  & $13.78_{\pm 0.08}$ & $13.78_{\pm 0.04}$ & $12.78_{\pm 0.10}$ & $13.84_{\pm 0.06}$ \\
        MVBench     & $36.67_{\pm 0.06}$ & $36.50_{\pm 0.10}$ & $37.02_{\pm 0.10}$ & $40.70_{\pm 0.12}$ \\
    \midrule
        \rowcolor{gray!15} Average      & $34.93_{\pm 0.10}$ & $34.53_{\pm 0.09}$ & $34.74_{\pm 0.10}$ & $\mathbf{35.75}_{\pm 0.09}$ \\
        \rowcolor{gray!15} Number over \textsc{Uniform} & - & 4/12 & 7/12 & 9/12 \\
    \bottomrule
    \end{tabularx}
    \label{tab:eval_video_0.5B}
\end{table}

\begin{table}[!ht]
    \centering
    \caption{\textbf{Comparison of data mixtures for LLaVA-7B video-image-text instruction tuning.}}
    \begin{tabularx}{\textwidth}{lCCCC}
    \toprule
    Benchmark & \textsc{Uniform} & \textsc{Avg} & \textsc{Fused} & \ours \\
    \midrule
        AI2D        & $71.83_{\pm 0.03}$ & $72.41_{\pm 0.08}$ & $72.83_{\pm 0.06}$ & $72.15_{\pm 0.06}$ \\
        DocVQA      & $56.47_{\pm 0.04}$ & $56.42_{\pm 0.06}$ & $55.67_{\pm 0.08}$ & $57.51_{\pm 0.10}$ \\
        InfoVQA     & $35.74_{\pm 0.12}$ & $34.65_{\pm 0.10}$ & $34.40_{\pm 0.07}$ & $35.89_{\pm 0.06}$ \\
        MathVerse   & $25.63_{\pm 0.11}$ & $25.52_{\pm 0.12}$ & $24.75_{\pm 0.08}$ & $26.40_{\pm 0.14}$ \\
        MMBench     & $71.05_{\pm 0.03}$ & $75.52_{\pm 0.06}$ & $73.28_{\pm 0.04}$ & $74.57_{\pm 0.05}$ \\
        MMStar      & $48.18_{\pm 0.08}$ & $49.03_{\pm 0.06}$ & $46.55_{\pm 0.04}$ & $48.79_{\pm 0.07}$ \\
        MMMU        & $45.67_{\pm 0.14}$ & $45.11_{\pm 0.10}$ & $44.78_{\pm 0.12}$ & $45.56_{\pm 0.13}$ \\
        ScienceQA   & $83.44_{\pm 0.04}$ & $86.07_{\pm 0.13}$ & $83.29_{\pm 0.10}$ & $87.26_{\pm 0.08}$ \\
        OCRBench    & $56.50_{\pm 0.11}$ & $56.90_{\pm 0.08}$ & $57.60_{\pm 0.09}$ & $57.20_{\pm 0.14}$ \\
        RealworldQA & $57.91_{\pm 0.16}$ & $56.99_{\pm 0.15}$ & $59.22_{\pm 0.08}$ & $57.39_{\pm 0.09}$ \\
        Video-MMMU  & $29.78_{\pm 0.07}$ & $30.56_{\pm 0.05}$ & $29.11_{\pm 0.08}$ & $30.33_{\pm 0.06}$ \\
        MVBench     & $52.73_{\pm 0.08}$ & $51.58_{\pm 0.12}$ & $53.12_{\pm 0.07}$ & $53.60_{\pm 0.11}$ \\
    \midrule
        \rowcolor{gray!15} Average  & $52.91_{\pm 0.09}$ & $53.39_{\pm 0.10}$ & $52.88_{\pm 0.08}$ & $\mathbf{54.40}_{\pm 0.10}$ \\
        \rowcolor{gray!15} Number over \textsc{Uniform} & - & 6/12 & 5/12 & 10/12 \\
    \bottomrule
    \end{tabularx}
    \label{tab:eval_video_7B}
\end{table}

\subsection{Domain weights transfer to Qwen2-VL}
\label{supp:subset:qwen-vl}
To explore the generality of \ours across different architectures, we test domain weights obtained from LLaVA-0.5B on Qwen-VL-2B \citep{Qwen2-VL} in \cref{tab:qwen2_vl} with the same setup of \cref{subsec:exp:add_video}.
The results show that \ours maintains its benefit compared with other baselines also on Qwen-VL-2B.
It would be interesting to further explore the benefit of new domain weights on more model architectures.

\begin{table}[ht]
\centering
\caption{\textbf{Transfer domain weights from LLaVA-0.5B to Qwen2-VL-2B for video-image-text instruction tuning.}}
\begin{tabular}{lcccc}
\toprule
    Benchmark & \textsc{Uniform} & \textsc{Avg} & \textsc{Fused} & \ours \\
\midrule
    AI2D & 67.78 & 67.94 & 67.29 & 68.26 \\
    DocVQA & 75.10 & 75.76 & 80.11 & 78.11 \\
    InfoVQA & 42.69 & 42.42 & 42.72 & 44.02 \\
    MathVerse & 21.19 & 18.78 & 23.73 & 23.98 \\
    MMBench & 59.02 & 56.44 & 55.33 & 59.71 \\
    MMStar & 41.37 & 42.90 & 40.89 & 41.11 \\
    MMMU & 37.44 & 36.00 & 35.98 & 37.44 \\
    ScienceQA & 77.89 & 79.13 & 77.84 & 79.23 \\
    OCRBench & 71.60 & 73.80 & 72.90 & 72.30 \\
    RealworldQA & 58.82 & 58.82 & 58.04 & 58.69 \\
    Video\_MMMU & 21.02 & 21.50 & 20.32 & 20.83 \\
    MVBench & 56.38 & 56.88 & 56.88 & 56.92 \\
\midrule
\rowcolor{gray!15} Average& 52.53 & 52.53 & 52.67 & \textbf{53.38} \\
\rowcolor{gray!15} Number over \textsc{Uniform} & - & 7/12 & 6/12 & 8/12 \\
\bottomrule
\end{tabular}
\label{tab:qwen2_vl}
\end{table}

\section{Further comparisons with related works}

\paragraph{Data mixing in LMs.}
Finding a high-quality data composition for LM pretraining is crucial for improved performance.
Domain reweighting improves LM downstream performance by rebalancing data contributions from different sources \citep{brown2020languagemodelsfewshotlearners,touvron2023llama,blakeney2024does}, but manual data mixing is not scalable and may lead to suboptimal domain weights \citep{albalak2024surveydataselectionlanguage,jiang2024adaptive,aryabumi2024code}.
Therefore, some works in the LM field explore the data mixing problems.
DoReMi \citep{xie_doremi_2023} employs a small proxy model to redistribute weights across various domains using Group DRO \citep{sagawa2019distributionally}, thereby enhancing the training effectiveness of large base models. 
Group DRO was also used in \citep{thudi2025mixmax}. 
DoGE~\citep{fan_doge_2024,fan2024dynamic} employs approximate bilevel optimization to train proxy models for domain weight determination.
Recently, \citep{liu2024regmix} employs linear regression models to approximate validation loss across diverse data mixtures by training a large number of very small proxy models.
\citet{chen2024aioli} create a more general framework with the above methods as specific instantiations.
Nevertheless, proxy-based methods necessitate algorithmic modifications in the training procedure, incurring supplementary proxy computational expenditure when multiple training stages are required, as is the case in VLMs.
Moreover, these approaches are limited to small proxy models, which may not be feasible within the context of VLMs with both vision and language models.
Other approaches focus on optimizing certain skills, e.g., \citet{chen2023skill} introduced a skills-oriented framework for modulation of data mixtures during model training.
\citet{thudi2025mixmin} use proxy models in a bilevel optimization framework to optimize the data mixture with downstream data samples.
\citet{held2025optimizing} propose mixing by estimating influence on downstream performance from each domain and assuming a linear model for the mixture weights.
Another line of works featurize the datasets by deriving a compact domain representation, e.g., through clustering~\cite{zhang2025domainvec} or pooling~\cite{xie2025chameleon}. The domain featureizations are then used to optimize dataset compositions, i.e., deciding weights to assign to the components of a combined dataset, through, e.g., correlation with validation set performance~\citep{zhang2025domainvec} or through leverage scores~\citep{xie2025chameleon}.
Drawing inspiration from scaling law research \citep{kaplan2020scaling,hoffmann2022training}, Data Mixing Laws~\citep{ye2024data} characterize the relationship between mixtures through exponential formulations, with other data mixture scaling laws proposed in \citep{que2024d,gu2024cmr,jiang2024adaptive,kang2024autoscale}.
Overall, these works have shown that choosing the right data mixture in LMs can boost performance significantly in terms of perplexity and downstream tasks' accuracy.
However, these works are limited to LMs and do not consider the challenges posed by VLMs, which require a more complex data mixture strategy due to, e.g., the multimodal nature of the data, missing modalities, and different training pipelines.

\paragraph{Leverage score mixing.}
Our unimodal scores measure the contribution of each domain w.r.t. the weight vector $w$ optimally fitted on the entire distribution, formulated through the regression task \eqref{eq:primal} with uniform target values across domains. 
Other common related learning tasks include ridge leverage scores (RLS). 
RLS measures the uniqueness of each data point through a weighted norm of the rows of the eigenvector matrix of the covariance. 
Specifically, RLS aims to find a vector $w$ being orthogonal to all data points except $x_i$. 
It can be formulated as regression for each domain $i$ separately with error variables $e_j = \gamma_j - w^\top x_j$, with
$
\gamma_j = 1 \text{ if } j=i,\; 0 \text{ otherwise}
$
.
\cite{xie2025chameleon} assign higher weights to domains with lower RLS, thus employing inverse RLS as a proxy for dominant directions. 
In our scores, we formulate \eqref{eq:primal} that seeks $w$ capturing shared structure across the \emph{entire collection} of domain embeddings. 
This is achieved by assigning a uniform target value for all domains, i.e., $e_i = 1 - w^\top x_i$. Our scores thus have a different objective, which can be analyzed in the following perspectives.
\emph{(i)} Our resulting score directly quantifies domain relevance, allowing for direct reweighting without inversion. This foundational difference in objective allows for a more natural and direct measure of how much each domain contributes to the shared structure in the embeddings.  
\emph{(ii)} Our work aims to capture multi-modal couplings in the data domains. Our new direct formulation \eqref{eq:primal} facilitates the construction of the multi-modal objective. 
By introducing shared latent variables $\alpha_i$ via the Fenchel-Young inequality~\eqref{eq:rkm}, we achieve principled coupling across multiple modalities in the dual formulation, whereas \cite{xie2025chameleon} cannot easily achieve such extension through the inverse RLS.

\paragraph{Data strategies for VLMs.}
Data mixtures in VLMs are typically hand-picked by the model developers based on intuition or large grid searches, and no systematic approach is used to select the training data mixture.
{Qwen-VL}~\citep{bai2023qwenvlversatilevisionlanguagemodel} employs a three-stage training pipeline utilizing a multilingual and multimodal corpus. The pre-training data is task-specific, e.g., captioning and OCR data.
In the instruction tuning stage, they combine multi-modal and text-only dialogue to mantain language capabilities performance.
LLaVA \citep{liu2023visualinstructiontuning,li2024llavaonevisioneasyvisualtask,liu2024improvedbaselinesvisualinstruction} additionally integrates LLM-generated instruction-following data with visual inputs.
They openly release the LLaVA-OneVision~\citep{li2024llavaonevisioneasyvisualtask} datasets as collections of domain-specific data, which we use in our experiments.
{Bunny}~\citep{he2024efficientmultimodallearningdatacentric} emphasizes the importance of high-quality data curation. Their approach focuses on finding coresets of the training dataset to improve model performance by removing uninformative image-text pairs.
{SAIL-VL}~\citep{dong2025scalablevisionlanguagemodel} constructs a high-quality dataset through recaptioning via existing frontier VLMs. This curated dataset facilitates effective pretraining and fine-tuning of VLMs across various scales.
Previous data selection works on CLIP training include, e.g., {CiT}~\citep{xu2023citcurationtrainingeffective}, which proposes a dynamic data curation method coupling a data objective into the learning process by measuring the similarity between text embeddings and task-specific metadata; and, {SIEVE}~\citep{mahmoud2024sievemultimodaldatasetpruning}, which introduces a dataset pruning technique using synthetic captions generated by image-captioning models, allowing to identify and remove noisy or misaligned samples, enhancing dataset quality.
Data strategies for VLMs have also been studied, e.g., data cleaning, toxicity removal, deduplication; see \citep{bai2024surveymultimodallargelanguage} for a comprehensive survey.
{DataComp}~\citep{gadre2023datacompsearchgenerationmultimodal} deals with data filtering.
{Infinity-MM}~\citep{gu2025infinitymmscalingmultimodalperformance} investigates the scaling of multimodal models by increasing both model capacity and training data volume. 
W.r.t. integrating multiple modalities more in general, this is a long-standing challenge in machine learning~\citep{baltruvsaitis2018multimodal,huang2023multimodal,li2024survey}. 
Simple fusion methods, such as early fusion via concatenation \citep{barnum2020benefitsearlyfusionmultimodal} or late fusion by ensembling \citep{boulahia2021early, li2024multimodalalignmentfusionsurvey}, are often used.
Another strategy is to learn a shared latent space where modalities are mapped to, enabling tasks like cross-modal retrieval~\citep{liu2023visualinstructiontuning,zhu2023minigpt}, using contrastive learning~\citep{radford2021learning,alayrac2022flamingo} or duality~\citep{houthuys2018multi,tao2024tensor}.
Other methods utilize attention to represent interaction between modality-specific encoders \citep{lu2019vilbert,cai2024internlm2}.
Overall, the composition of training data is crucial for the performance of VLMs.
To avoid reliance on expensive iterative performance measurements, our work introduces a method that can automatically assign appropriate resampling weights to each multi-modal domain of VLM training data.